\definecolor{LightCyan}{rgb}{0.88,1,1}
\newcolumntype{a}{>{\columncolor{LightCyan}}c}
\setlist{leftmargin=5mm}
\newcommand{\w}{\mathbf{w}}
\newcommand{\x}{\bm{x}}
\newcommand{\p}{\bm{p}}
\newcommand{\g}{\bm{g}}
\newcommand{\R}{\mathbb{R}}
\renewcommand{\P}{\mathbb{P}}
\newtheorem{theorem}{Theorem}
\newtheorem{corollary}{Corollary}
\newtheorem{fact}{Fact}
\theoremstyle{definition}
\newtheorem{definition}{Definition}
\theoremstyle{remark}
\newtheorem{remark}{Remark}
\title{Differentially Private Optimizers Can Learn Adversarially \\Robust Models}
\author{\name Zhiqi Bu$^*$ \email woodyx218@gmail.com \\
      \AND
      \name Yuan Zhang\thanks{Equal contribution.} \email ewyuanzhang@gmail.com \\
}
\begin{document}

\maketitle
\begin{abstract}
Machine learning models have shone in a variety of domains and attracted increasing attention from both the security and the privacy communities. One important yet worrying question is: Will training models under the differential privacy (DP) constraint have an unfavorable impact on their adversarial robustness? While previous works have postulated that privacy comes at the cost of worse robustness, we give the first theoretical analysis to show that DP models can indeed be robust and accurate, even sometimes more robust than their naturally-trained non-private counterparts. We observe three key factors that influence the privacy-robustness-accuracy tradeoff: (1) hyper-parameters for DP optimizers are critical; (2) pre-training on public data significantly mitigates the accuracy and robustness drop; (3) choice of DP optimizers makes a difference. With these factors set properly, we achieve 90\% natural accuracy, 72\% robust accuracy ($+9\%$ than the non-private model) under $l_2(0.5)$ attack, and 69\% robust accuracy ($+16\%$ than the non-private model) with pre-trained SimCLRv2 model under $l_\infty(4/255)$ attack on CIFAR10 with $\epsilon=2$. In fact, we show both theoretically and empirically that DP models are Pareto optimal on the accuracy-robustness tradeoff. Empirically, the robustness of DP models is consistently observed across various datasets and models. We believe our encouraging results are a significant step towards training models that are private as well as robust.
\end{abstract}

\section{Introduction}
Machine learning models trained on large amount of data can be vulnerable to privacy attacks and leak sensitive information. For example, \citet{carlini2021extracting} shows that attackers can extract text input from the training set via GPT2 \citep{radford2019language}, that contains private information such as address, phone number, name and so on; \citet{zhu2019deep} shows that attackers can recover both the image input and the label from gradients of ResNet \citep{he2016deep} trained on CIFAR100 \citep{krizhevsky2009learning}, SVHN \citep{netzer2011reading}, and LFW \citep{huang2008labeled}.

To protect against the privacy risk rigorously, the differential privacy (DP) is widely applied in various deep learning tasks \citep{abadi2016deep,mcmahan2017learning,bu2020deep,nori2021accuracy,li2021large}, including but not limited to computer vision, natural language processing, recommendation system, federated learning and so on. At high level, the privacy is protected via DP optimizers such as DP-SGD and DP-Adam, while allowing the models to remain highly accurate. In other words, the privacy concerns have been largely alleviated by switching from regular optimizers to DP ones.

An equally important concern from the security community is that, many models such as deep neural networks are known to be vulnerable against adversarial attacks. This robustness risk can be severe when the attackers can successfully fool models to make the wrong prediction, through modifying the input data by a negligible amount. An example from \citet{robustness} shows that a strong ResNet50 trained on ImageNet \citep{deng2009imagenet} with 76.13\% accuracy can degrade to 0.00\% accuracy, even if the input image is merely perturbed by 4/255 at each pixel.
 
However, at the intersection of these two concerns, previous works have empirically observed an upsetting privacy-robustness tradeoff under some scenarios, implying the implausibility of achieving both robustness and privacy at the same time. In \citet{song2019privacy} and \citet{mejia2019robust}, adversarially trained models are shown to be more vulnerable to privacy attacks, such as the membership inference attack, than naturally trained models. In \citet{tursynbek2020robustness} and \citet{boenisch2021gradient}, DP trained models were more vulnerable to robustness attacks than naturally trained models on MNIST and CIFAR10. This leads to the following concern:
\begin{center}
Does DP optimization necessarily lead to less adversarially robust models?
\end{center}

On the contrary, we show that DP models can be adversarially robust, sometimes even more robust than the naturally trained models. Indeed, we illustrate that DP models can be Pareto optimal, so that any model with higher accuracy than DP ones must have worse robustness. We observe that:
\begin{enumerate}
\item DP training itself does not worsen adversarial robustness in comparison to the natural training.
\item The robustness is largely affected by the DP optimization hyper-parameters $(R,\eta)$, where different hyper-parameters are equally privacy-preserving but significantly different in accuracy and robustness.
\item DP optimization hyper-parameters $(R,\eta)$ achieving the best accuracy (which can be much less robust than the natural training) is different to those achieving the best robustness.
\item The robustness is less affected by the privacy budget (e.g. \Cref{tab:multi-atk-cifar10_rob} and \Cref{tab:resnet18 general attacks}). Even when we consider non-DP optimization (i.e. $\epsilon=\infty$, no noise but with clipping), the models could be comparably or more robust than naturally trained models (see the blue boxplots in \Cref{fig:dpsgd-intercept-r-boxplot}).
\end{enumerate}

In sharp contrast to the empirical nature of previous arts, we enhance our understanding about the adversarial robustness of DP models from a theoretical angle. Our analysis shows that DP classifiers without adversarial training may in fact be the most adversarially robust classifier. Motivated by our theoretical analysis, we claim that the hyperparameter tuning is vital to successfully learning a robust and private model, where the optimal choice is to use small clipping norms and large learning rates. This is interesting as such a hyperparameter choice is also observed to be the most effective in learning highly accurate models under DP \citep{li2021large,kurakin2022toward,de2022unlocking}. In fact, using a small clipping norm is equivalent to normalizing all per-sample gradients, which allows a clear demonstration in \Cref{tab:auto} that the optimal hyperparameter for natural accuracy is different to that for robust accuracy (only $\eta$ is present because $R$ is absorbed in the automatic DP-SGD \citep{bu2022automatic}).

\begin{table}[!htb]
    \centering
    \caption{Robust and natural accuracy are achieved by different hyperparameter $\eta$ on CIFAR10. Same setting as in \citet{tramer2020differentially}, under $(\epsilon,\delta)=$(2,1e-5) and attacked by 20 steps of $l_\infty(2/255)$ PGD.}
    \resizebox{0.9\linewidth}{!}{
    \begin{tabular}{c|c|cccccccc}
        \hline
        \multirow{3}{*}{DP}& learning rate $\eta$ & $2^{-8}$& $2^{-7}$& $2^{-6}$& $2^{-5}$& $2^{-4}$& $2^{-3}$& $2^{-2}$& $2^{-1}$ \\\cline{2-10}
        &natural accuracy & 84.36&87.33&89.45&90.76&91.76&92.50&92.54&\textbf{92.70}\\
        & robust accuracy & 75.45&78.92&\textbf{81.03}&80.96&78.87&72.77&58.29&26.97\\\hline\hline
        \multirow{3}{*}{non-DP}& learning rate $\eta$ & $2^{-5}$ & $2^{-4}$& $2^{-3}$& $2^{-2}$& $2^{-1}$& $2^{0}$& $2^{1}$& $2^{2}$ \\\cline{2-10}
        & natural accuracy & 94.23& 94.31&94.38&94.46&\textbf{94.60}&94.39&94.21&93.84\\
        & robust accuracy &79.32& \textbf{79.57}&74.18&66.20&55.87&46.00&41.75&45.21\\
        \hline
    \end{tabular}
    }
    \label{tab:auto}
\end{table}

\vspace{-0.2cm}
Additionally, we advocate pretraining the model and selecting proper optimizers for DP training, which allow us to max out the performance on MNIST \citep{lecun1998gradient}, Fashion MNIST\citep{xiao2017fashion}, CIFAR10\citep{krizhevsky2009learning}, and CelebA\citep{liu2015faceattributes}.
\begin{remark}
Most of this work does not use adversarial training (except in \Cref{tab:cifar with madry} and \Cref{tab:cifar with madry l2}) and should be distinguished from the certified robustness \citep{lecuyer2019certified}, which does not guarantee DP in a per-sample sense rather than use the mathematical tools from DP in a per-pixel way.
\end{remark}

\section{Preliminaries}
\textbf{Notation.} We use $f: \mathcal{X} \rightarrow \mathcal{Y}$ to denote the model mapping from data space $\mathcal{X}$ to label space $\mathcal{Y}$. We denote the datapoints as $\{\x_i\}\in\R^d$ and the labels as $\{y_i\}$, following i.i.d. from the distributions $\x$ and $y$ respectively. We denote the gradient of the $i$-th sample at step $t$ as $g_t(\x_i,y_i;\w,b)$, where $\w$ is the weights and $b$ is the bias of model $f$.

To start, we introduce the definition of DP, particularly the $(\epsilon,\delta)$-DP \citep{dwork2014algorithmic}.
\begin{definition}\label{def:DP}
A randomized algorithm $M$ is $ (\varepsilon, \delta)$-DP if for any neighboring datasets $ S, S^{\prime} $ that differ by one arbitrary sample, and for any event $E$, it holds that
\begin{align}
 \mathbb{P}[M(S) \in E] \leqslant \mathrm{e}^{\varepsilon} \mathbb{P}\left[M\left(S^{\prime}\right) \in E\right]+\delta.
\end{align}
\end{definition}

In words, DP guarantees in the worst case that adding or removing one single datapoint $(\x_i,y_i)$ does not affect the model much, as quantified by the small constants $(\epsilon,\delta)$. Therefore DP limits the information possibly leaked from such datapoint.



In deep learning, DP is guaranteed by privatizing the gradient in two steps: (1) the per-sample gradient clipping\footnote{We use $C_R(g_t(x_i)):=g_t\cdot\min\{R/||g_t||_2,1\}$ as in \citet{abadi2016deep} to denote the gradient clipping, after which each per-sample gradient has norm $\leq R$. Note that in \Cref{tab:auto} we use the automatic clipping such that $C_R(g_t(x_i)):=g_t/||g_t||_2$.} (specified by the clipping norm $R$, to bound the sensitivity of $\sum \g_t(x_i)$); (2) the random noising (specified by the noise multiplier $\sigma_\text{DP}$, to randomize the outcome so that each sample's contribution is indistinguishable; $\sigma_\text{DP}$ can be determined by the privacy accountants \cite{dwork2014algorithmic,abadi2016deep}). From an algorithmic viewpoint, DP training simply applies any optimizer on the private gradients instead of on the regular gradients.

\vspace{-0.6cm}
\begin{align}
&\text{Non-DP training on regular gradient:} & {\textstyle\sum}_i \g_t(\x_i,y_i)
\\
&\text{DP training on private gradient:} & {\textstyle\sum}_i C_R(\g_t(\x_i,y_i))+\sigma_\text{DP} R\cdot\mathcal{N}(\bm 0,\bm I)
\label{eq:private grad}
\end{align}

\vspace{-0.2cm}
We are interested in the adversarial robustness of models trained by DP optimizers. To be sure, we consider the adversarially robust classification error and the natural classification error as
\begin{align}
\mathcal{R}_{\gamma}(f):=\P(\exists ||\p||_\infty<\gamma, \text{ s.t. } f(\x+\p)\neq y),\quad 
\mathcal{R}_{0}(f):=\P(f(\x)\neq y).
\label{eq:robust error}    
\end{align}
where $\p\in\R^d$ is the adversarial perturbation, $\gamma$ is the attack magnitude, and $l_\infty$ (and $l_2$) attack is considered. Notice that when $\gamma=0$, the robust error in \eqref{eq:robust error} reduces to the natural error.


\section{Theoretical Analysis on Linear Classifiers}

To theoretically understand the adversarial robustness of DP learning, we study the robustness of the DP and non-DP linear models on a binary classification problem. We consider a mixed Gaussian distribution, where the positive class $y=+1$ has a larger variance (i.e. it is more difficult to be classified correctly\footnote{The fact that larger variance indicates lower intra-class accuracy is proven by \citet[Theorem 1]{xu2021robust}.}) than the negative class $y=-1$:
\begin{align}
  \bm x \sim\left\{\begin{array}{ll}\mathcal{N}\left(\bm\theta_d, K^2\sigma^{2} \mathbf{I}_d\right) & \text { if } y=+1 \\ \mathcal{N}\left(-\bm\theta_d, \sigma^{2}\mathbf{I}_d\right) & \text { if } y=-1\end{array}\right.
  \label{eq:x distribution}
\end{align}
where $y \overset{\text{unif}}{\sim}\{-1,+1\}$, $\bm\theta_d=(\theta,\cdots,\theta)\in\R^d$, $\sigma>0$ and $K>1$.

\begin{figure}[H]
    \centering
\includegraphics[width=0.3\linewidth]{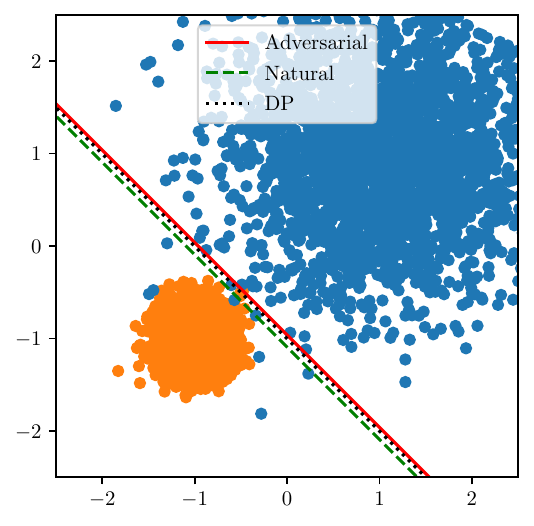}
\includegraphics[width=0.33\linewidth]{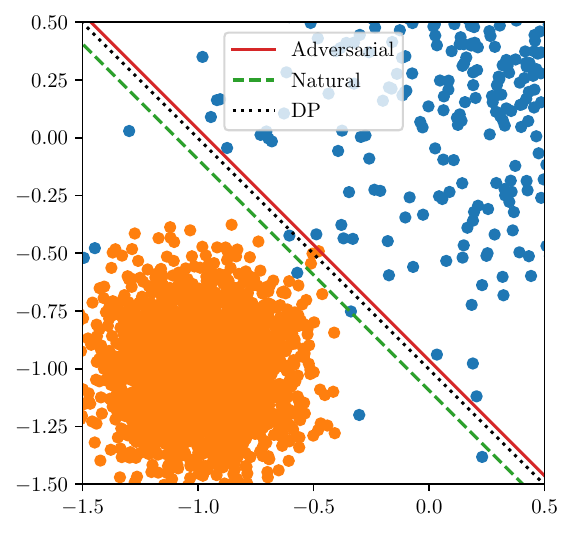}
    \vspace{-0.3cm}
    \caption{Decision boundaries of linear classifiers for \eqref{eq:x distribution}, $K=4,\sigma=0.2,\theta=1$.}
    \label{fig:decision-boundary-logistic-gaussian}
\vspace{-0.4cm}
\end{figure}

The setting\footnote{This setting is also studied in \citet{xu2021robust}, which focuses on the robustness-fairness tradeoff, and thus is different to our interest.} in \eqref{eq:x distribution} is analyzable because of the data symmetry along the diagonal axis $\mathbb{E}x_1=\cdots=\mathbb{E}x_d$, as illustrated in \Cref{fig:decision-boundary-logistic-gaussian}. We will show that this symmetry leads to an explicit decision hyperplane of linear classifiers in \Cref{thm:robust classifier}, which further characterizes the strongest adversarial perturbation $\p^*\equiv\arg\sup_{\|\p\|_\infty<\gamma}\P(f(\x+\p)\neq y)$ explicitly. 

In the following analysis, we focus on the linear classifiers $f(\bm x;\w,b)=\text{sign}(\sum_{j=1}^d w_j x_j+b)$ with weights $w_j$ and bias $b$.

\begin{remark}
Within the family of linear classifiers, by the symmetry of data in \eqref{eq:x distribution}, it can be rigorously shown by \eqref{eq: weights are ones} that the optimal weights with respect to the natural and robust errors are always $w_1=\cdots=w_d$, which matches our empirical calculation on DP models. 
That is, the weights do not distinguish between the robust and natural models. Consequently, the key to the adversarial robustness lies in the intercept $b$, which is analyzed in the subsequent sections.
\end{remark} 

\subsection{Optimal Robust and Natural Linear Classifiers}

We start by reviewing the robust error of robust classifier and the explicit formula of its intercept.

\begin{theorem}[Extended from Theorem 2 in \citet{xu2021robust}]\label{thm:robust classifier}
For data distribution $(\x,y)$ in \Cref{eq:x distribution} and under the $\gamma$ attack magnitude, we define the optimal robust linear classifier as
$$
  f_{\gamma}=\underset{f \text{ is linear}}{\arg\min}\ \P(\exists ||\p||_\infty<\gamma, \text{ s.t. } f(\x+\p)\neq y)=\underset{f \text{ is linear}}{\arg\min}\ \mathcal{R}_{\gamma}(f).
$$

\vspace{-0.4cm}
The optimal robust error is
\vspace{-0.1cm}
\begin{align*}
\mathcal{R}_{\gamma}\left(f_\gamma\right)=\frac{1}{2}\Phi\left(B(K,\gamma)-K \sqrt{B(K,\gamma)^{2}+q(K)}\right) +\frac{1}{2}\Phi\left(-K B(K,\gamma)+\sqrt{B(K,\gamma)^{2}+q(K)}\right),
\end{align*}

\vspace{-0.4cm}
where $\Phi$ is the cumulative distribution function of standard normal, $B(K,\gamma)=\frac{2}{K^{2}-1} \frac{\sqrt{d} (\theta-\gamma)}{\sigma} $ and $ q(K)=\frac{2 \log K}{K^{2}-1} $. Furthermore, by the symmetry of the data distribution, we have
\begin{align}
1,\cdots,1,b_{\gamma}&=\underset{\w,b}{\arg\min} \mathcal{R}_{\gamma}(f(\cdot;\w,b)),
\label{eq: weights are ones}
\\
b_{\gamma}&=\frac{K^{2}+1}{K^{2}-1} d (\theta-\gamma)-K \sqrt{\frac{4d^{2} (\theta-\gamma)^{2}}{\left(K^{2}-1\right)^{2}}+d \sigma^{2} q(K)}.
\label{eq:b_gamma}
\end{align}
\end{theorem}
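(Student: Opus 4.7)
The plan is to (i) perform the inner adversarial maximization in closed form, reducing $\mathcal{R}_\gamma(f)$ to a sum of two Gaussian tail probabilities; (ii) use the permutation symmetry of \eqref{eq:x distribution} to pin down the direction of the optimal weight vector; and (iii) solve the remaining univariate problem in the intercept $b$ explicitly.

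For step (i), for any linear $f(\cdot;\w,b)$ the inner supremum in \eqref{eq:robust error} is attained by $\p_j^\star = -y\gamma\,\mathrm{sign}(w_j)$, so that $\w^\top\p^\star = -y\gamma\|\w\|_1$. Conditioning on $y$ and using $\w^\top\x\mid y=+1 \sim \mathcal{N}(\w^\top\bm\theta_d,\, K^2\sigma^2\|\w\|_2^2)$ together with $\w^\top\x\mid y=-1 \sim \mathcal{N}(-\w^\top\bm\theta_d,\, \sigma^2\|\w\|_2^2)$, I obtain
$$\mathcal{R}_\gamma(f) = \tfrac12\Phi(A_+) + \tfrac12\Phi(A_-), \quad A_+ = \frac{-b+\gamma\|\w\|_1-\w^\top\bm\theta_d}{K\sigma\|\w\|_2}, \quad A_- = \frac{b+\gamma\|\w\|_1-\w^\top\bm\theta_d}{\sigma\|\w\|_2}.$$

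For step (ii), I first reduce to $\w\ge 0$ componentwise: replacing $w_j$ by $|w_j|$ preserves $\|\w\|_1,\|\w\|_2$ but weakly increases $\w^\top\bm\theta_d = \theta\sum w_j$, which (since $\theta>\gamma$ in the regime of interest) can only decrease both $A_+,A_-$ and hence the error. Under $\w\ge 0$ with $\|\w\|_2=1$, the identity $\|\w\|_1=\sum w_j = \w^\top\bm\theta_d/\theta$ collapses $\gamma\|\w\|_1-\w^\top\bm\theta_d$ to $(\gamma-\theta)\sum w_j$, strictly decreasing in $\sum w_j$; Cauchy--Schwarz then gives $\sum w_j\le\sqrt d$ with equality iff $\w\propto\mathbf{1}$, which after rescaling to $\w=(1,\ldots,1)$ establishes \eqref{eq: weights are ones}. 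This step is the main obstacle: each single inequality is elementary, but one must show that a single direction $\w\propto\mathbf{1}$ simultaneously optimizes the three coupled scalar functionals $\|\w\|_1,\|\w\|_2,\w^\top\bm\theta_d$ entering $A_\pm$, and that the nonnegativity reduction is compatible with the Cauchy--Schwarz bound.

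For step (iii), with $\w=(1,\ldots,1)$ one has $\|\w\|_1=d$, $\|\w\|_2=\sqrt d$, $\w^\top\bm\theta_d=d\theta$, and differentiating $\mathcal{R}_\gamma$ in $b$ reduces the stationarity condition to $\phi(A_+)/K=\phi(A_-)$, equivalently $A_-^2-A_+^2 = 2\log K$. Clearing denominators turns this into a quadratic in $b$; completing the square (the discriminant is proportional to $\tfrac{4d^2(\theta-\gamma)^2}{(K^2-1)^2}+d\sigma^2 q(K)$) and selecting the root corresponding to a minimum via a second-order check gives exactly \eqref{eq:b_gamma}. Substituting $b_\gamma$ back into $A_\pm$ and simplifying with $B:=B(K,\gamma)$ yields $A_+ = -KB+\sqrt{B^2+q(K)}$ and $A_- = B-K\sqrt{B^2+q(K)}$, which matches the stated error. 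The remaining work here is routine algebra; the only mild subtlety is the sign choice in the square root of $b_\gamma$.
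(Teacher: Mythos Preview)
Your argument is correct and tracks the paper's proof closely in steps (i) and (iii): the paper derives the same two-term Gaussian error in \eqref{eq:robust error of any intercept}, sets $\partial_b\mathcal{R}_\gamma=0$, obtains the identical quadratic in $b$, and substitutes back. The one place you diverge is step (ii). The paper simply invokes Lemma~2 of \citet{xu2021robust}, i.e.\ the permutation symmetry of \eqref{eq:x distribution}, to assert $\w\propto\mathbf 1$ without further work. You instead give a self-contained variational argument: flip signs to $\w\ge 0$, then apply Cauchy--Schwarz to $\sum_j w_j$ at fixed $\|\w\|_2$. This buys you an explicit proof in place of a citation, at the mild cost of the side hypothesis $\gamma<\theta$ (which is what the Cauchy--Schwarz step actually needs; the sign-flip step where you placed the caveat only requires $\theta>0$). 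The symmetry route in principle avoids that restriction, but it still owes a convexity or uniqueness argument to pass from ``$\mathcal{R}_\gamma$ is permutation-invariant in $\w$'' to ``the minimizer has equal coordinates,'' which neither the paper nor its reference spells out. For the regime the paper works in, both routes are sound.
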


\Cref{thm:robust classifier} gives the closed form of the optimal robust classifier $f_\gamma$, or equivalently its intercept $b_\gamma$, and the optimal robust error. The special case of natural error can be easily recovered by setting $\gamma=0$:
$$1,\cdots,1,b_{0}=\arg\min_{\w,b} \mathcal{R}_0(f(\cdot;\w,b)).$$

\begin{wrapfigure}{r}{0.46\textwidth}
\vspace{-1cm}
    \centering
    \includegraphics[width=\linewidth]{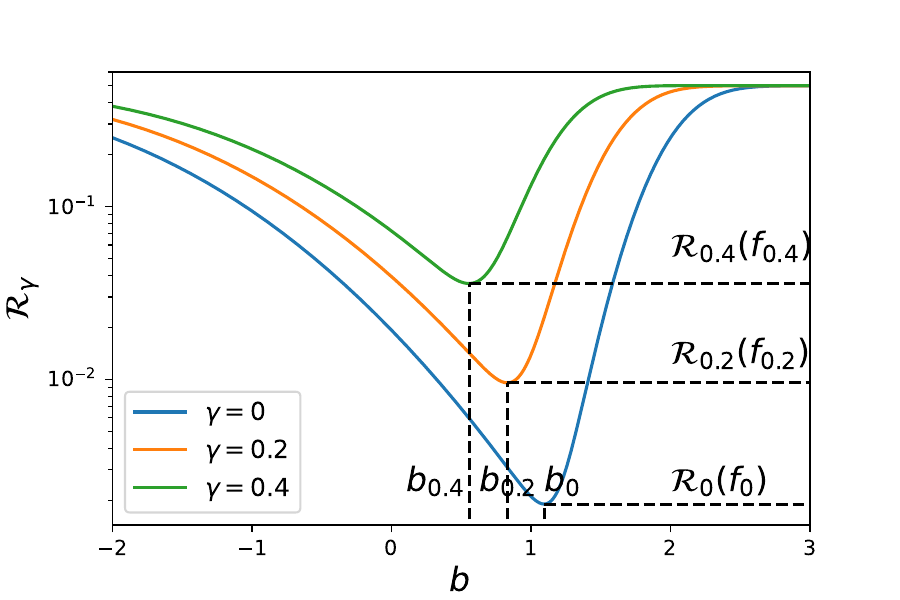}
    \vspace{-0.7cm}
    \caption{Intercepts and robust/natural accuracy under \eqref{eq:x distribution}, with $K=4,\sigma=0.2,\theta=1$.}
    \label{fig:b and R}
    \vspace{-1.5cm}
\end{wrapfigure}

We know for sure from \Cref{thm:robust classifier} that there exists a tradeoff between robustness and accuracy: it is impossible for the natural classifier $f_0$ to be optimally robust or the robust classifier $f_\gamma$ to be optimally accurate, since $b_0\neq b_\gamma$ (c.f. \Cref{fig:b and R}).

\begin{fact}\label{fact:b decreasing}
$b_\gamma$ in \eqref{eq:b_gamma} is strictly decreasing in $\gamma$, ranging from $b_0$ to $-\infty$.
\end{fact}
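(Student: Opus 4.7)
My plan is to prove monotonicity by differentiating \eqref{eq:b_gamma} directly with respect to $\gamma$ and extracting a uniform sign from the resulting expression, and then to obtain the limit by an asymptotic expansion of the square root as $\gamma\to\infty$.

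For monotonicity, I introduce the shorthand $u:=\frac{2d(\theta-\gamma)}{K^2-1}$ and $v^2:=d\sigma^2 q(K)$, under which \eqref{eq:b_gamma} compresses to $b_\gamma=\frac{K^2+1}{K^2-1}\,d(\theta-\gamma)-K\sqrt{u^2+v^2}$. A direct differentiation (using $du/d\gamma=-\tfrac{2d}{K^2-1}$) yields
$$\frac{db_\gamma}{d\gamma}=-\frac{K^2+1}{K^2-1}\,d+\frac{2Kd}{K^2-1}\cdot\frac{u}{\sqrt{u^2+v^2}}.$$
Since $\sigma>0$ and $K>1$ force $v^2>0$ (recall $q(K)=\frac{2\log K}{K^2-1}>0$), the ratio satisfies $|u/\sqrt{u^2+v^2}|<1$ strictly, so
$$\frac{db_\gamma}{d\gamma}<-\frac{K^2+1}{K^2-1}\,d+\frac{2Kd}{K^2-1}=-\frac{(K-1)^2}{K^2-1}\,d<0,$$
where the collapse $K^2+1-2K=(K-1)^2$ supplies the sign. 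This gives strict monotonic decrease on all of $\gamma\in\mathbb{R}$.

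For the range, $b_\gamma|_{\gamma=0}=b_0$ by definition. As $\gamma\to\infty$ we have $u\to-\infty$, so $\sqrt{u^2+v^2}\sim|u|=-u$, and therefore
$$b_\gamma\sim\frac{K^2+1}{K^2-1}\,d(\theta-\gamma)+Ku=\frac{(K+1)^2}{K^2-1}\,d(\theta-\gamma)=\frac{K+1}{K-1}\,d(\theta-\gamma)\to-\infty.$$
Continuity together with the strict decrease established above then delivers the claimed range $(-\infty,b_0]$ as $\gamma$ sweeps over $[0,\infty)$.

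In truth this is more a bookkeeping exercise than a substantive lemma: the only nontrivial ingredient is the observation that the two competing coefficients collapse through the AM-GM-type identity $K^2+1-2K=(K-1)^2$, which is what converts an a priori uncertain sign into a definite negative one. The lone genuine risk of error lies in sign-tracking through the substitution $\theta-\gamma$ and the chain rule on the radical, so that is where I would take the most care.
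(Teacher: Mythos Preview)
Your proof is correct and follows essentially the same approach as the paper. You explicitly derive the derivative bound $\frac{db_\gamma}{d\gamma}<-\frac{(K-1)^2}{K^2-1}d=-\frac{K-1}{K+1}d$ that the paper simply cites from prior work, and for the limit you use an asymptotic expansion where the paper instead invokes the cruder bound $b_\gamma<\frac{K^2+1}{K^2-1}d(\theta-\gamma)\to-\infty$ (the subtracted square root being positive); both routes are equally valid.
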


\begin{proof}[Proof of \Cref{fact:b decreasing}]
Proof 5 in \citet{xu2021robust} shows that $\frac{d b_\gamma}{d\gamma}\leq-\frac{K-1}{K+1}d<0$, thus $b_\gamma$ is strictly decreasing in $\gamma$. Therefore, the range of $b_\gamma$ is $(b_{\infty},b_0]$. Finally, we note that $b_{\gamma} < \frac{K^2+1}{K^2-1}(\theta-\gamma)$, hence $b_{\infty} = -\infty$.
\end{proof}

\subsection{Adversarially Robust Errors of Private Linear classifiers}
Now we analyze the robust error of DP classifiers, which requires a different analysis from \Cref{thm:robust classifier} because $\langle 1\rangle$ DP modifies the optimization instead of the objective function, unlike $\mathcal{R}_{\gamma}$ which modifies the natural error $\mathcal{R}_{0}$; $\langle 2\rangle$ consequently,  deriving the DP parameters $\w$ and $b$ is much more difficult: we need to solve ${\arg\min}_{\w,b}\mathcal{R}_{\gamma}(f(\cdot;\w,b))$ under the additional constraint (imposed by DP) that \eqref{eq:private grad} equals 0 in expectation. While it is possible to simplify the problem, for example, by analyzing the deterministic gradient flow \cite{bu2021convergence} or by formulating the actual objective that DP-SGD optimizes \cite{song2021evading}, this is beyond the scope of this work.

We consider a specific linear classifier $f(\cdot;\mathbf{1},b)$ where only the intercept $b$ is learned and privatized\footnote{Note that DP is only required on trainable parameters that are learned from data; otherwise no data privacy can be leaked. Therefore this specific classifier is guaranteed to be DP.}. This is known as the bias term fine-tuning \citep{zaken2021bitfit,bu2022differentially}, a popular approach in fine-tuning the DP or standard neural networks.

For this classifier and any $b$, the robust error ($\gamma\neq 0$) and the natural error ($\gamma= 0$) are:
\begin{align}
\mathcal{R}_{\gamma}(f)=& \P(\exists\|\p\|_\infty \leq \gamma \text { s.t. } f(\x+\p) \neq y) = \max _{\|\p\|_\infty \leq \gamma} \P(f(\x+\p) \neq y) 
\nonumber
\\=& \frac{1}{2} \P(f(\x+\bm\gamma_d) \neq-1 \mid y=-1) +\frac{1}{2} \P(f(\x-\bm\gamma_d) \neq+1 \mid y=+1) 
\nonumber
\\=& \frac{1}{2}\P\left(\sum_{i=1}^{d}w_j\left(x_j+\gamma\right)+b>0 \mid y=-1\right) +\frac{1}{2}\P\left(\sum_{i=1}^{d}w_j\left(x_j-\gamma\right)+b<0 \mid y=+1\right)
\nonumber
\\
=&\frac{1}{2}\Phi\left(-\frac{\sqrt{d}(\theta-\gamma)}{\sigma}+\frac{1}{\sqrt{d} \sigma} \cdot b\right) 
+\frac{1}{2}\Phi\left(-\frac{\sqrt{d}(\theta-\gamma)}{K \sigma}-\frac{1}{K \sqrt{d} \sigma} \cdot b\right)
\label{eq:robust error of any intercept}
\end{align}
where $\bm\gamma_d\equiv(\gamma,\cdots,\gamma)$. With \eqref{eq:robust error of any intercept}, we can analyze the robust and natural errors for any intercept $b$ (private or not, robust or natural) and any attack magnitude $\gamma$.

Our next result answers the following question: fixing a DP classifier $f_\text{DP}:=f(\cdot;\mathbf{1},b_\text{DP})$, or equivalently its intercept $b_\text{DP}$, under which attack magnitude is the classifier robust? We show that, 
although $b_\text{DP}$ is not available in the closed form, 
it is possible for some attack magnitude $\gamma^*$ that $b_\text{DP}=b_{\gamma^*}$, and thus the DP classifier is the most robust classifier among all. 

\begin{theorem}\label{thm:DP is most robust}
For data distribution $(\x,y)$ in \Cref{eq:x distribution} and for any $b_\textup{DP}<b_0$, there exists $\gamma^*>0$ such that $b_{\gamma^*}=b_\textup{DP}$, and therefore
$$\underset{f\text{ is linear}}{\min}\mathcal{R}_{\gamma^*}(f)\equiv\mathcal{R}_{\gamma^*}(f_{\gamma^*})=\mathcal{R}_{\gamma^*}(f_\textup{DP}).$$
\end{theorem}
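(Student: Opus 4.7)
The statement reduces to a pure intermediate-value-theorem argument once one reads off the structural content of the earlier results, so my plan is to organize the proof around three short steps rather than any new computation.

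First, I would invoke \Cref{fact:b decreasing} together with the closed-form expression \eqref{eq:b_gamma}. The formula for $b_\gamma$ is manifestly continuous in $\gamma$ on $[0,\infty)$, and the fact tells us it is strictly decreasing on that interval with $b_0$ at $\gamma=0$ and $\lim_{\gamma\to\infty}b_\gamma=-\infty$. Hence $\gamma\mapsto b_\gamma$ is a continuous strictly decreasing bijection from $[0,\infty)$ onto $(-\infty,b_0]$. For any prescribed $b_\textup{DP}<b_0$, the intermediate value theorem then produces a unique $\gamma^*>0$ with $b_{\gamma^*}=b_\textup{DP}$. This is the entire ``existence'' half of the theorem.

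Second, I would identify the DP classifier with the optimal robust classifier at attack level $\gamma^*$. By construction $f_\textup{DP}=f(\,\cdot\,;\mathbf 1,b_\textup{DP})$ has weights $(1,\ldots,1)$ and intercept $b_\textup{DP}$. By \eqref{eq: weights are ones} of \Cref{thm:robust classifier}, the optimal robust linear classifier at magnitude $\gamma^*$ is $f_{\gamma^*}=f(\,\cdot\,;\mathbf 1,b_{\gamma^*})$. Since $b_\textup{DP}=b_{\gamma^*}$, the two classifiers coincide pointwise, i.e.\ $f_\textup{DP}\equiv f_{\gamma^*}$.

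Third, I would conclude by chaining equalities:
\begin{equation*}
\min_{f\text{ linear}}\mathcal{R}_{\gamma^*}(f)=\mathcal{R}_{\gamma^*}(f_{\gamma^*})=\mathcal{R}_{\gamma^*}(f_\textup{DP}),
\end{equation*}
where the first equality is the definition of $f_{\gamma^*}$ given in \Cref{thm:robust classifier} and the second uses the identification from step two. There is essentially no obstacle: the closed form \eqref{eq:b_gamma} makes monotonicity and continuity transparent, and \Cref{thm:robust classifier} already supplies the fact that the optimal weights are $(1,\ldots,1)$ so that a matching intercept is the only thing one needs. The only place where care is needed is to make sure the hypothesis $b_\textup{DP}<b_0$ puts us strictly inside the range of $\gamma\mapsto b_\gamma$ for $\gamma>0$ (rather than at the boundary $\gamma=0$), which is exactly why the theorem is stated with a strict inequality.
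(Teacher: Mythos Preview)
Your proposal is correct and follows essentially the same approach as the paper: invoke \Cref{fact:b decreasing} to get that $b_\gamma$ is continuous, strictly decreasing, and ranges over $(-\infty,b_0]$, then apply the intermediate value theorem to produce $\gamma^*$ with $b_{\gamma^*}=b_\textup{DP}$, from which $f_\textup{DP}=f_{\gamma^*}$ and the chain of equalities follows. Your write-up is somewhat more explicit about continuity and the identification step than the paper's two-line proof, but the substance is identical.
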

\begin{proof}[Proof of \Cref{thm:DP is most robust}]
By \Cref{fact:b decreasing}, $b_{\gamma}-b_\text{DP}$ is decreasing in $\gamma$, ranging from $b_0-b_\text{DP}$ to $-\infty$. By the intermediate value theorem, there exists $\gamma^*>0$ such that $b_{\gamma^*}=b_\text{DP}$, i.e. $f_\text{DP}=f_{\gamma^*}(\cdot;\mathbf{1},b_{\gamma^*})$.
\end{proof}

By \Cref{thm:DP is most robust}, as long as the DP intercept is sufficiently small, the DP classifier must be the most robust under some attack magnitude, among all linear classifiers. We visualize in \Cref{fig:dpsgd-intercept-r-boxplot} at $\gamma^*=0.075$, that indeed $b_\text{DP}\approx b_{\gamma^*}$ (grey solid line).

\begin{figure}[!htb]
    \centering
    \includegraphics[width=0.7\linewidth]{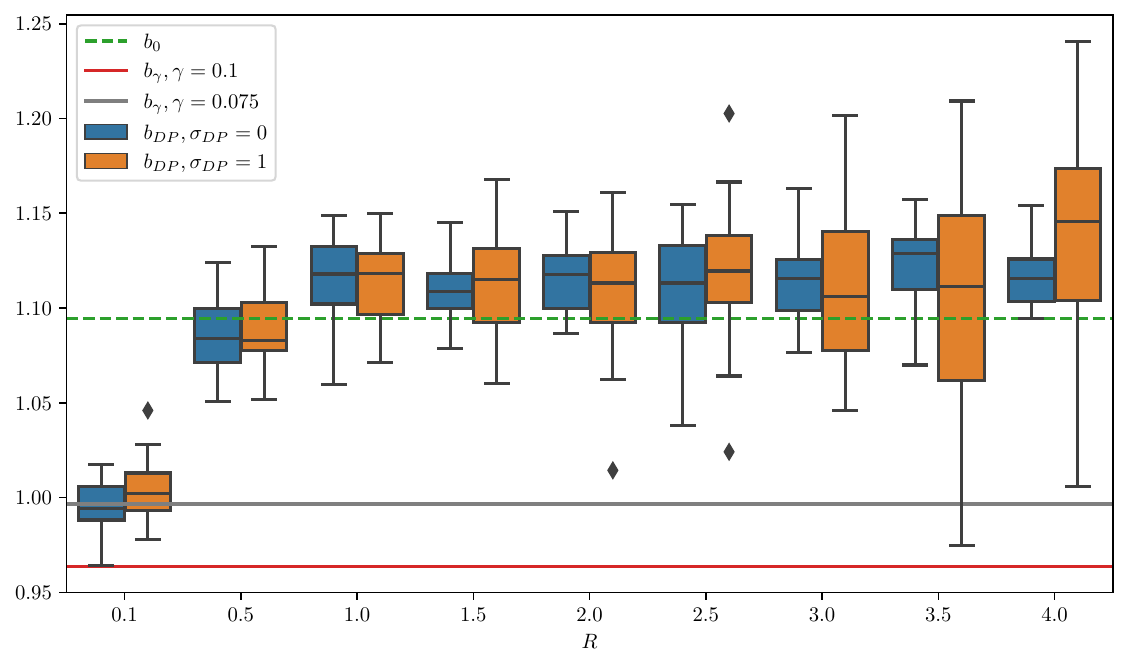}
    \vspace{-0.3cm}
    \caption{Intercepts (decision boundaries) for \eqref{eq:x distribution}, same setup as \Cref{fig:b and R}. For DP classifiers, we use DP-SGD with $\eta=8$, epochs=50, batch size=1000, sample size=10000, $(\epsilon,\delta)=(15,1e-4)$. }
    \label{fig:dpsgd-intercept-r-boxplot}
\end{figure}

To validate the condition of \Cref{thm:DP is most robust}, we now demonstrate the achievability of $b_\text{DP}<b_0$ as a result of the per-sample gradient clipping in DP optimizers. In words, we show that the robust intercept is stationary by the DP gradient descent but not so by the regular gradient descent.

We consider the above setting with an attack magnitude $\gamma$ and $\sigma_\text{DP}=0$. We temporarily ignore the noise because it only adds variance to the gradient but does not affect the mean (see the blue and orange boxes in \Cref{fig:dpsgd-intercept-r-boxplot}), and when learning rate is small, $\sigma_\text{DP}$ has little effect on convergence \citep{bu2021convergence}. 

In DP gradient descent, if the clipping norm $R$ is sufficiently small, then each per-sample gradient $\g_t(\x_i,y_i;\mathbf{1},b_\gamma)$ has the same magnitude $R$ after clipping. Therefore the positive samples, pushing the intercept $b$ to increase, can balance with the negative ones that pull $b$ to descrease. Thus $b=b_\gamma$ is a stationary point learnable by DP training, even though it is smaller than $b_0$ by \Cref{fact:b decreasing}. However, in the non-DP gradient descent, $b_\gamma$ is not stationary. This is because the positive class gradient $\sum_i g_t(\x_i,+1;\mathbf{1},b_\gamma)$ is larger than the negative class gradient $\sum_i g_t(\x_i,-1;\mathbf{1},b_\gamma)$, so as to push the decision boundary $b_\gamma$ towards $b_0$, where the natural classifier $f_0$ is defined. We visualize our analysis in \Cref{fig:gradient-dist-gaussian-dptrained}.

\begin{figure}[!htb]
    \vspace{-0.2cm}
    \centering
    \includegraphics[width=0.4\linewidth]{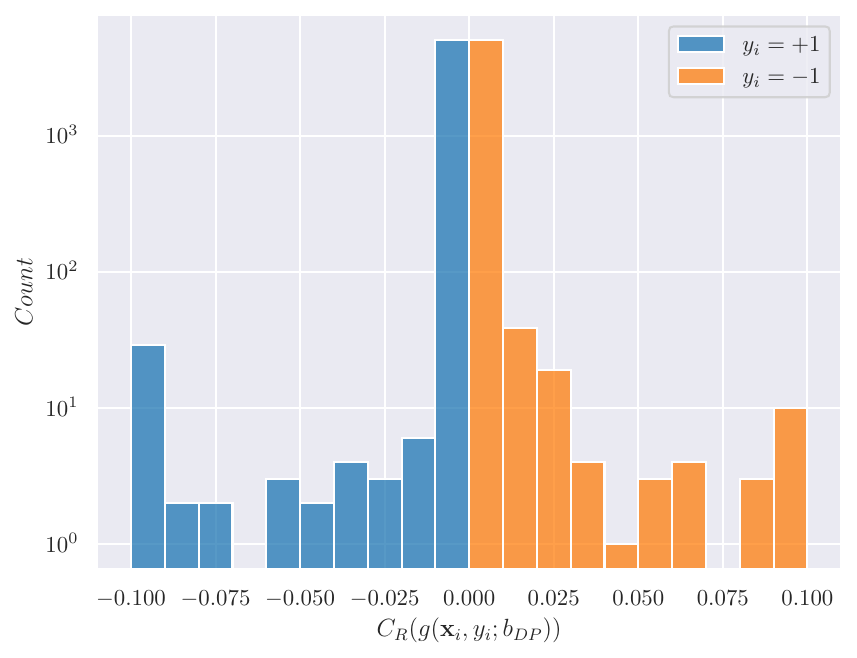}
    \includegraphics[width=0.4\linewidth]{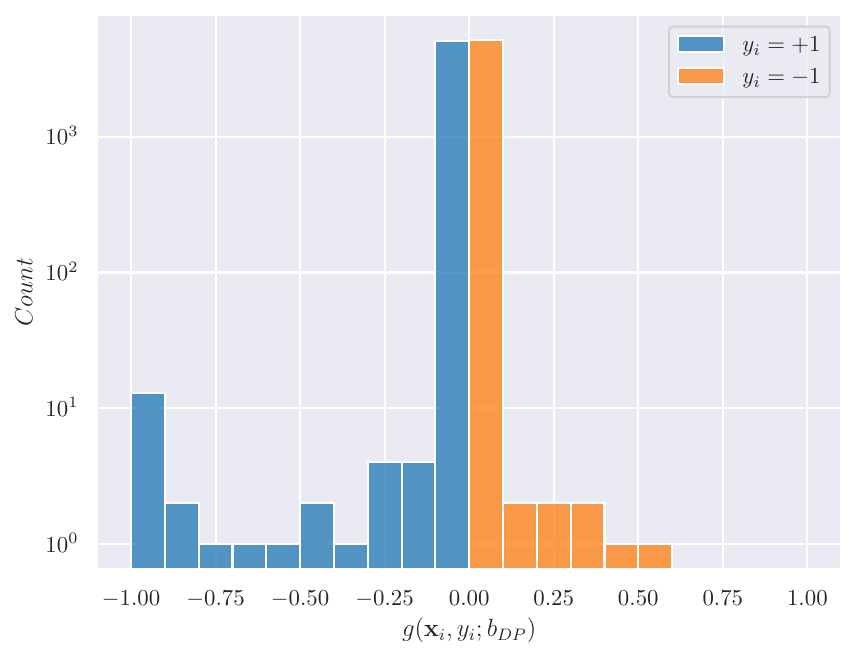}
    \vspace{-0.2cm}
    \caption{Distribution of gradient with clipping (left, which is balanced) and without clipping (right, which is unbalanced) of linear classifiers for \eqref{eq:x distribution}, $K=4,\sigma=0.2,\theta=1$, and clipping norm $R=0.1$.}
    \label{fig:gradient-dist-gaussian-dptrained}
\end{figure}

Next, suppose we only require the DP classifier to be more robust than the natural classifier, without requiring it to be the most robust among all linear classifiers. We can answer the question: fixing the attack magnitude $\gamma$, under which condition is $f_\text{DP}$ more robust than the natural classifier $f_0$? 
\begin{theorem}\label{thm:DP is robust than nat}
Fixing the attack magnitude $\gamma$, if data distribution in \Cref{eq:x distribution} satisfies $\frac{K^2+1}{2K}\gamma<|\theta-\gamma|+|\theta|$, then whenever $b_\gamma<b_\textup{DP}<b_0$, we have
$$\underset{f\text{ is linear}}{\min}\mathcal{R}_\gamma(f)\equiv\mathcal{R}_\gamma(f_\gamma)<\mathcal{R}_\gamma(f_\textup{DP})<\mathcal{R}_\gamma(f_0).$$
Furthermore, any intercept $b$ with better natural accuracy than $b_\textup{DP}$ must have worse robust accuracy:
$$\mathcal{R}_0(f)<\mathcal{R}_0(f_\textup{DP})\Longrightarrow\mathcal{R}_\gamma(f)>\mathcal{R}_\gamma(f_\textup{DP}).$$
\end{theorem}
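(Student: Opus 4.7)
The plan is to view $\rho_\gamma(b):=\mathcal{R}_\gamma(f(\cdot;\mathbf{1},b))$, given explicitly by \eqref{eq:robust error of any intercept}, as a one-variable function of $b$ and determine its shape. Writing $z_1(b)=\tfrac{b-d(\theta-\gamma)}{\sqrt{d}\sigma}$ and $z_2(b)=-\tfrac{b+d(\theta-\gamma)}{K\sqrt{d}\sigma}$, I compute $\rho_\gamma'(b)=\tfrac{1}{2\sqrt{d}\sigma}[\phi(z_1(b))-\tfrac{1}{K}\phi(z_2(b))]$, so $\rho_\gamma'(b)=0$ is equivalent to $z_1(b)^2-z_2(b)^2=2\log K$, a quadratic in $b$ with at most two real roots. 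Since $\phi(z_2)$ decays more slowly than $\phi(z_1)$ as $|b|\to\infty$ when $K>1$, $\rho_\gamma'<0$ at both ends, so $\rho_\gamma$ has the shape decrease $\to$ minimum $\to$ increase $\to$ maximum $\to$ decrease, with $\rho_\gamma(b)\to 1/2$ from below (resp.\ above) at $-\infty$ (resp.\ $+\infty$); by \Cref{thm:robust classifier}, the smaller critical point is $b_\gamma$ (global minimum) and the larger is a local maximum $b_\gamma^{\max}$. The same analysis at $\gamma=0$ gives the analogous shape for $\rho_0:=\mathcal{R}_0$ with critical points $b_0<b_0^{\max}$.

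For Part 1, it suffices to prove $\rho_\gamma$ is strictly increasing on $[b_\gamma,b_0]$, equivalently $\rho_\gamma'(b_0)>0$. Using $\rho_0'(b_0)=0$ and the affine shifts $z_i(b_0)-z_i^{(0)}(b_0)=\tfrac{\sqrt{d}\gamma}{K_i\sigma}$ (with $K_1=1$, $K_2=K$, and the superscript $(0)$ denoting evaluation at $\gamma=0$), expanding $z_1(b_0)^2-z_2(b_0)^2<2\log K$ reduces to $(K^2+1)b_0+\tfrac{d(K^2-1)}{2}\gamma<(K^2-1)d\theta$. Plugging in \eqref{eq:b_gamma} and using the elementary bound $b_0\leq\tfrac{K-1}{K+1}d\theta$ (obtained by dropping the $q(K)$ term under the square root when $\theta>0$; the $\theta<0$ case uses the symmetric bound) collapses this to $(K+1)^2\gamma<4K\theta$, which rearranges exactly to $\tfrac{K^2+1}{2K}\gamma<|\theta-\gamma|+|\theta|$. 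The sandwich $\mathcal{R}_\gamma(f_\gamma)<\mathcal{R}_\gamma(f_\textup{DP})<\mathcal{R}_\gamma(f_0)$ then follows from $b_\gamma<b_\textup{DP}<b_0\leq b_\gamma^{\max}$ and strict monotonicity of $\rho_\gamma$ on $[b_\gamma,b_\gamma^{\max}]$.

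For Part 2, since $b_\textup{DP}<b_0$ lies on the strictly decreasing left branch of $\rho_0$ (where $\rho_0<1/2$), $\rho_0(b_\textup{DP})<1/2$, and the sublevel set $\{b:\mathcal{R}_0(b)<\mathcal{R}_0(b_\textup{DP})\}$ is an interval $(b_\textup{DP},b'')\subset(b_\textup{DP},b_0^{\max})$, because the right tail $[b_0^{\max},\infty)$ keeps $\rho_0\geq 1/2>\rho_0(b_\textup{DP})$. A parallel calculation to the one in the previous paragraph, starting instead from $z_1(b_0)+z_2(b_0)<0$, shows that the very same hypothesis also implies $\rho_\gamma(b_0)<1/2$, and hence $\rho_\gamma(b_\textup{DP})<\rho_\gamma(b_0)<1/2$ by Part 1. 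For $b$ in the sublevel set, $b>b_\textup{DP}>b_\gamma$: when $b\leq b_\gamma^{\max}$, strict monotonicity of $\rho_\gamma$ on $(b_\gamma,b_\gamma^{\max})$ gives $\rho_\gamma(b)>\rho_\gamma(b_\textup{DP})$; when $b>b_\gamma^{\max}$ the decreasing right tail keeps $\rho_\gamma(b)>1/2>\rho_\gamma(b_\textup{DP})$. This establishes the Pareto claim.

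The main obstacle is the algebraic reduction: substituting \eqref{eq:b_gamma} into the linear-in-$\gamma$ inequalities for $\rho_\gamma'(b_0)>0$ and $\rho_\gamma(b_0)<1/2$, and verifying that both collapse to the single clean hypothesis $\tfrac{K^2+1}{2K}\gamma<|\theta-\gamma|+|\theta|$, requires careful sign bookkeeping on $\theta$ and $\theta-\gamma$ together with the correct orientation of the elementary bound on $b_0$. The convenient fact that both reductions yield the identical threshold $(K+1)^2\gamma<4K\theta$ is what allows the unified, simple hypothesis in the statement and simultaneously handles both branches of the Part 2 argument.
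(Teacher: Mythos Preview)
Your approach matches the paper's core strategy: both prove that $b_0$ lies strictly between the two critical points $b_\gamma^-=b_\gamma$ and $b_\gamma^+$ of $\rho_\gamma$, so that $\rho_\gamma$ is strictly increasing on $[b_\gamma,b_0]$ and the sandwich in Part~1 follows. The paper does this by computing $b_\gamma^+$ explicitly from the quadratic first-order condition and then comparing $b_0<b_\gamma^+$ by lower-bounding each square root in \eqref{eq:b_gamma}; you instead test the sign of $\rho_\gamma'(b_0)$ directly and linearize via the identity $\rho_0'(b_0)=0$, which is a slightly different but equivalent algebraic route to the same threshold $(K+1)^2\gamma<4K\theta$ (i.e.\ the stated hypothesis in the relevant regime $0<\gamma<\theta$).

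For Part~2 you are actually more careful than the paper. The paper's sketch asserts only that ``$\mathcal{R}_\gamma$ is increasing on the right of $b_\gamma$,'' which is literally true only on $(b_\gamma,b_\gamma^+)$ and leaves the region $b>b_\gamma^+$ unaddressed. Your additional verification that $\rho_\gamma(b_0)<1/2$ (which, as you correctly note, reduces via $z_1(b_0)+z_2(b_0)<0$ and the same bound $b_0\le\tfrac{K-1}{K+1}d\theta$ to the identical threshold) lets you dispose of that region by the tail bound $\rho_\gamma(b)>1/2$ for $b>b_\gamma^{\max}$, closing the gap cleanly.
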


\Cref{thm:DP is robust than nat} shows that, under some conditions, DP models are more robust than natural models, and cannot be dominated in the Pareto optimal sense. That is, DP linear classifier can be Pareto optimal in terms of the robust and natural accuracy.
We visualize the premise $b_\gamma<b_\textup{DP}<b_0$ in \Cref{fig:dpsgd-intercept-r-boxplot} as well as in \Cref{fig:decision-boundary-logistic-gaussian}, and the result $\mathcal{R}_{\gamma}(f_{\gamma})<\mathcal{R}_\gamma(f_\text{DP})<\mathcal{R}_\gamma(f_0)$ in \Cref{fig:b and R}. 

We emphasize that, our results on the $l_\infty$ attacks is generally extendable to $l_2$ attacks. Put differently, we show that DP models can be adversarially robust and Pareto optimal under both $l_\infty$ and $l_2$ attacks\footnote{In DP deep learning, the clipping is on the gradient level under $l_2$ norm (see Footnote 1), regardless of the $l$ norm in adversarial attacks, which are on the sample level, i.e. on $\x_i$.}.
\begin{corollary}[Extension to $l_2$ attacks]
\label{cor:l2 attacks}
All theorems hold for $l_2$ attacks by changing $\gamma\to\gamma/\sqrt{d}$.
\end{corollary}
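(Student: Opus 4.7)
The plan is to reduce the $l_2$ case to the $l_\infty$ case by observing that, for a weight-$\mathbf{1}$ linear classifier, the worst-case $l_2(\gamma)$ perturbation has the same effect on the decision score as an $l_\infty(\gamma/\sqrt{d})$ perturbation. First I would re-examine the symmetry argument leading to \eqref{eq: weights are ones} in \Cref{thm:robust classifier}: that argument relies only on the permutation-invariance of the data distribution \eqref{eq:x distribution} together with the permutation-invariance of the constraint set $\{\p:\|\p\|_\infty<\gamma\}$. Since the $l_2$ ball is also permutation-invariant, the same symmetrization delivers $w_1=\cdots=w_d$ as optimal weights for $l_2$ attacks, allowing me to again restrict attention to classifiers $f(\cdot;\mathbf{1},b)$.

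Next I would compute the worst-case $l_2$ perturbation on such a classifier. By Cauchy--Schwarz,
\begin{equation*}
\sup_{\|\p\|_2\le\gamma}\mathbf{1}^\top\p = \gamma\|\mathbf{1}\|_2 = \gamma\sqrt{d},
\end{equation*}
attained at $\p^* = (\gamma/\sqrt{d})\mathbf{1}$. For $y=+1$ the adversary chooses $-\p^*$ and for $y=-1$ chooses $+\p^*$; in either case the linear score shifts by $\pm\gamma\sqrt{d}=\pm d\cdot(\gamma/\sqrt{d})$. This is exactly the shift that an $l_\infty(\gamma/\sqrt{d})$ adversary produces via $\bm\gamma_d$ in the derivation of \eqref{eq:robust error of any intercept} after the substitution $\gamma\mapsto\gamma/\sqrt{d}$.

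With these two observations in hand, I would revisit the calculation in \eqref{eq:robust error of any intercept} term by term: every occurrence of $\gamma$ there is rescaled by $1/\sqrt{d}$, yielding the $l_2$ robust error formula. All downstream claims --- the closed-form expressions for $b_\gamma$ and $\mathcal{R}_\gamma(f_\gamma)$ in \Cref{thm:robust classifier}, the monotonicity in \Cref{fact:b decreasing}, and the Pareto-optimality statements in \Cref{thm:DP is most robust} and \Cref{thm:DP is robust than nat} --- are derived by purely algebraic manipulations of the robust error formula and never revisit the attack geometry, so the substitution $\gamma\mapsto\gamma/\sqrt{d}$ propagates verbatim through every statement.

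The only delicate step is verifying that the permutation-symmetry of the optimal weights genuinely survives the change of attack norm, since this is the point at which the attack geometry enters the argument. I would confirm it by the standard symmetrization trick: averaging any candidate $\w$ over the $d!$ coordinate permutations yields constant weights, and by Jensen together with the joint invariance of the data and the $l_2$ ball, this averaged classifier has no larger robust error than the original. After that, the proof is a bookkeeping exercise with no additional obstacles.
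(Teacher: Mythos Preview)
Your proposal is correct and follows essentially the same approach as the paper: both identify that for the weight-$\mathbf{1}$ classifier the worst-case $l_2(\gamma)$ perturbation is $\pm(\gamma/\sqrt{d})\mathbf{1}$, so the robust error formula \eqref{eq:robust error of any intercept} carries over with $\gamma\mapsto\gamma/\sqrt{d}$ and all downstream statements follow mechanically. Your write-up is in fact more careful than the paper's one-line sketch, since you explicitly justify via permutation-symmetrization that the optimal weights remain constant under the $l_2$ constraint---a point the paper leaves implicit.
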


\begin{remark}
Theorem 2 gives sufficient and necessary condition for the DP classifier to be more robust than all classifiers at one attack magnitude $\gamma^*$; Theorem 3 gives sufficient but not necessary condition for the DP classifier to be more robust than one classifier (the natural one) at many attack magnitudes.
\end{remark}

\section{Training Private and Robust Neural Networks}
\label{sec:experiments}

In this section, we extend our investigation beyond the linear classifiers in \Cref{thm:robust classifier}, \Cref{thm:DP is most robust}, and \Cref{thm:DP is robust than nat}, and study the robustness of DP neural networks. We emphasize that several state-of-the-art (SOTA) advances are actually achieved by \textit{linear classifiers} within the deep neural networks \citep{mehta2022large,tramer2020differentially}, i.e. by finetuning only the last linear layer of Wide ResNet, SimCLR, and vision transformers. Therefore these advances fall in the same setting as our theorems, although the neural networks are non-linear models in general.

By experimenting with real datasets MNIST, CIFAR10 and CelebA, we corroborate our insights gained from theoretically analyzing the linear models and empirically show that the DP neural networks can be adversarially robust in practice (despite being much more challenging to analyze). We use one Nvidia GTX 1080Ti GPU and the Renyi privacy accountant to calculate the privacy loss.

In summary, we have three key observations. 
\begin{enumerate}
    \item By selecting the hyper-parameters carefully, we can remain exactly the same level of DP but vastly stronger robustness. Especially, we visualize the distinct landscapes of robust accuracy and natural accuracy over $(R,\eta)$, which also depend on the choice of optimizers (see \Cref{app:RMSprop}).
    \item We observe a privacy-accuracy-robustness tradeoff, showing that DP models is Pareto optimal (with or without pre-training), thus extending \Cref{thm:DP is robust than nat} to deep learning.
    \item The robustness of DP models holds for general attacks, including single-step or multi-step (FGSM v.s. PGD), single method or ensemble (PGD v.s. APGD), and $l_\infty$ or $l_2$.
\end{enumerate}

\vspace{0.6cm}
\begin{remark}
Our analysis also implies that DP models can be resilient to data poisoning attacks, since adversarial examples can serve as strong data poisons \citep{fowl2021adversarial}. Such resilience is empirically observed in \citep{yang2022not,hong2020effectiveness}. 
\end{remark}

\vspace{0.5cm}
\subsection{Hyper-parameters are keys to robustness}
\label{sec:small clip regime}
\vspace{0.2cm}
In DP deep learning, the training hyper-parameters can be divided into two categories: some are related to the privacy accounting, including the batch size $B$, the noise multiplier $\sigma_\text{DP}$, the number of iterations $T$; the others are only related to the optimization but not to the privacy, including the clipping norm $R$ and the learning rate $\eta$. That is, changing $(R,\eta)$ can influence the accuracy and the robustness without affecting the DP guarantee.

On one hand, $R$ has to be small to achieve SOTA natural accuracy. Large models such as ResNet and GPT2 are optimally trained at $R<1$, even though the gradient's dimension is of hundreds of millions \citet{kurakin2022toward,li2021large,klause2022differentially,mehta2022large}. In fact, \cite{bu2022automatic} adopts an infinitely small $R=0^+$ to achieve SOTA results, essentially applying per-sample gradient normalization instead of the clipping.

On the other hand, DP training empirically benefits from large learning rate, usually 10 times larger than the non-DP training. This pattern is observed for DP-Adam \citep[Figure 4]{li2021large} and for DP-SGD \citep{kurakin2022toward} over text and image datasets.

Interestingly, we also observe such choice of $(R,\eta)$ performs strongly in the adversarial robustness context (though not the same hyper-parameters). By the ablation study in \Cref{fig:params-matter-celeba-simclr-sgdmtm-loglog} for CIFAR10 and in \Cref{app: ablation} for MNIST, Fashion MNIST and CelebA, it is clear that robust accuracy and natural accuracy have distinctively different landscapes over $(R,\eta)$. We observe that the optimal $(R,\eta)$ should be carefully selected along the diagonal ridge for DP-SGD to obtain high robust and high natural accuracy. Otherwise, even small deviation can lead to a sharp drop in the robustness, despite that the natural accuracy may remain similar (see upper right corner of 2D plots in \Cref{fig:params-matter-celeba-simclr-sgdmtm-loglog}). 

\vspace{0.6cm}
\begin{remark}
From \Cref{fig:params-matter-celeba-simclr-sgdmtm-loglog} (see also \Cref{fig:params-matter-celeba-adam-loglog}, \Cref{fig:optimizers CelebA} and \Cref{fig:params-matter-celeba-adam-loglog22} in the appendix), it is empirically sufficient to set $R\approx 0^+$ and to only tune the learning rate $\eta$ for both robust and natural accuracy, when using DP optimizers such as DP-SGD, DP-Adam, DP-RMSprop and DP-Adagrad. This approach, termed as automatic clipping by \cite{bu2022automatic}, reduces the 2-dimension hyperparameter search to a much cheaper 1-dimension search (c.f. \Cref{tab:auto}).
\end{remark}

\newpage
\begin{figure}[!htb]
\centering
\includegraphics[width=0.36\linewidth]{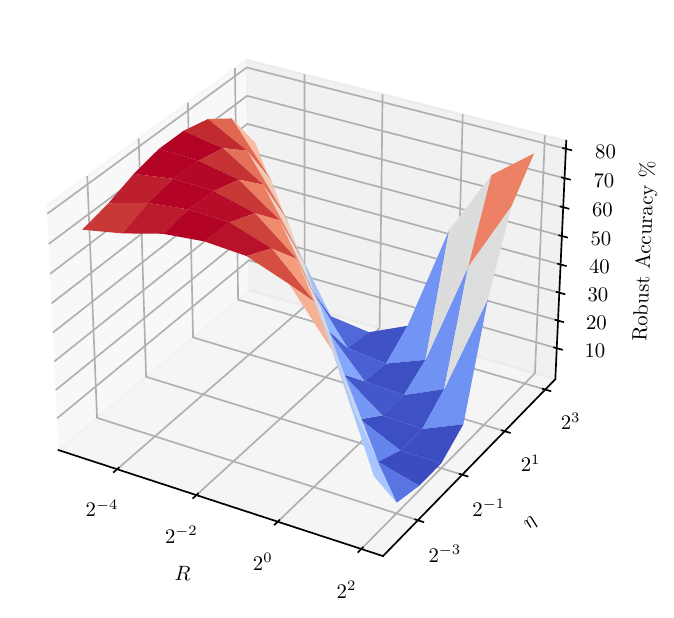}
    \includegraphics[width=0.38\linewidth]{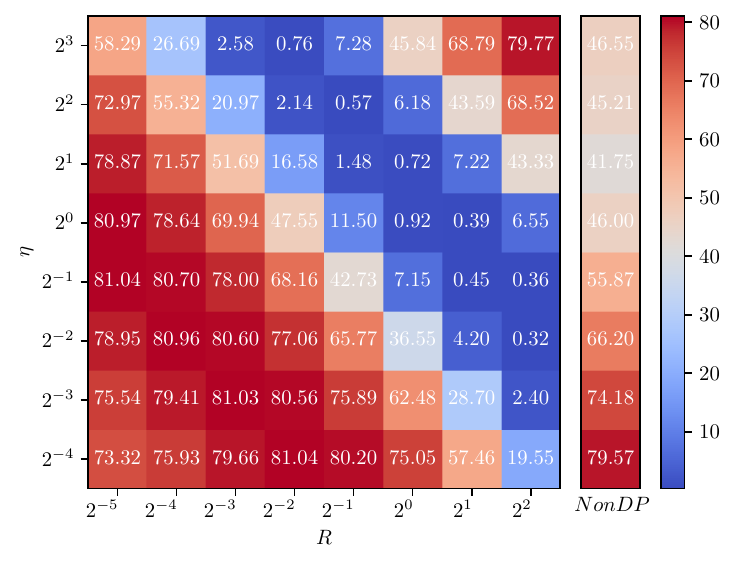} 
    \\\vspace{-0.2cm}
    \includegraphics[width=0.36\linewidth]{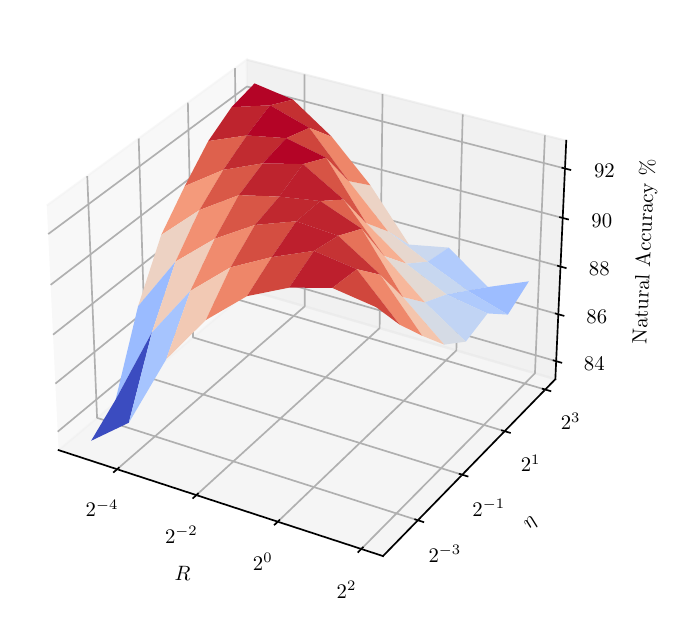}
    \includegraphics[width=0.38\linewidth]{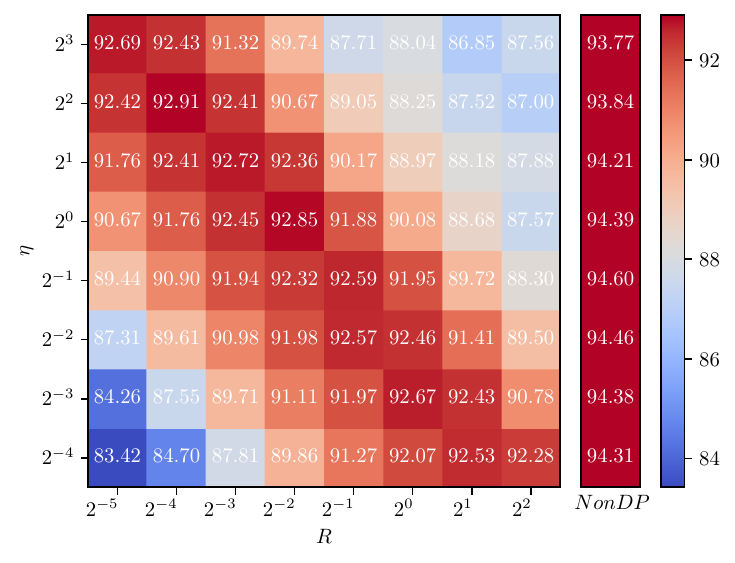}
    \caption{Robust and natural accuracy by $\eta$ and $R$ on CIFAR10. We use the same setting as in \citet{tramer2020differentially}: pretraining SimCLR on ImageNet and then privately training using DP-SGD with momentum=0.9, under $(\epsilon,\delta)=(2,1e-5)$ and attacked by 20 steps of $l_\infty(2/255)$ PGD.} 
    \label{fig:params-matter-celeba-simclr-sgdmtm-loglog}
\end{figure}

Our ablation study demonstrates that the DP neural network, with $81.04\%$ robust accuracy and $89.86\%$ natural accuracy, can be more robust than the most robust of naturally trained networks ($79.59\%$ robust accuracy and $94.31\%$ natural accuracy). If we trade some robustness for the natural accuracy, we can achieve the same level of robustness $(80.20\%)$ at 91.27\% natural accuracy, thus closing the gap between the natural accuracy of DP and non-DP models without sacrificing the robustness.

While \Cref{fig:params-matter-celeba-simclr-sgdmtm-loglog} presents the result of a single attack magnitude, we further study the influence of hyper-parameters under different attack magnitudes, with and without the adversarial training. We illustrate on CIFAR10 the $l_\infty$ attack performance in \Cref{tab:cifar with madry} and the $l_2$ one in \Cref{tab:cifar with madry l2}.

\begin{table}[H]
    \centering
\caption{Natural and robust accuracy of SimCLRv2 \citep{chen2020simple} and ResNet50 \citep{robustness} on CIFAR10 under 20 steps $l_\infty$ PGD attack. Here \textcolor{cyan}{cyan} columns are adversarial training and white columns are natural training. Adversarial training and \textit{robust} hyper-parameters are obtained by grid search over $\eta$ and $R$ against $l_\infty(2/255)$, and \textit{natural} hyper-parameters are adopted from \citet{tramer2020differentially}.
}
\resizebox{\linewidth}{!}{
\setlength\tabcolsep{1.2pt}
\begin{tabular}{c|acc|acc|acc|acc|ac}
\hline
&\multicolumn{12}{c}{SimCLRv2 pre-trained on unlabelled ImageNet}&\multicolumn{2}{|c}{ResNet50}
\\
\hline
 & DP& DP& DP& DP& DP& DP& DP & DP& DP& Non-DP& Non-DP & Non-DP& Non-DP & Non-DP\\
attack & $ \epsilon=2 $ & $ \epsilon=2 $&$ \epsilon=2 $& $ \epsilon=4 $&$ \epsilon=4 $&$ \epsilon=4 $&$ \epsilon=8 $&$ \epsilon=8 $&$ \epsilon=8 $& $ \epsilon=\infty$& $ \epsilon=\infty$& $ \epsilon=\infty$& $ \epsilon=\infty$& $ \epsilon=\infty$  \\
magnitude&adv $\sfrac{2}{255}$& robust& accurate&adv $\sfrac{2}{255}$& robust& accurate&adv $\sfrac{2}{255}$& robust& accurate&adv $\sfrac{2}{255}$& robust& accurate& adv $\sfrac{8}{255}$& accurate \\

\hline $\gamma=0$ & 90.46\% & $89.69\%$ & $92.87\%$ & 91.12\% & $90.91\%$ & $93.41\%$ & 91.70\% & $91.22\%$ & $93.64\%$ & 93.42\% & $94.29\%$ & $94.55\%$ & $87.03\%$ & $95.25\%$ \\

$\gamma=\sfrac{2}{255}$ & 83.83\% & $81.05\%$ & $33.21\%$ & 85.28\% & $82.53\%$ & $57.80\%$ & 86.12\% & $83.02\%$ & $68.90\%$ & 89.07\% & $79.79\%$ & $59.56\%$ &-- & -- \\

$\gamma=\sfrac{4}{255}$ & 75.56\% & $68.85\%$ & $0.16\%$ & 77.73\% & $70.21\%$ & $9.69\%$ & 78.62\% & $71.08\%$ & $28.09\%$ & 83.07\% & $53.56\%$ & $15.99\%$ & -- & -- \\

$\gamma=\sfrac{8}{255}$ & 53.61\% & $39.63\%$ & $0.00\%$ & 56.90\% & $38.39\%$ & $0.00\%$ & 57.84\% & $39.28\%$ & $0.01\%$ & 66.99\% & $8.14\%$ & $0.00\%$ & $53.49\%$ & $0.00\%$ \\

$\gamma=\sfrac{16}{255}$ & 8.05\% & $1.20\%$ & $0.00\%$ & 10.30\% & $0.65\%$ & $0.00\%$ & 11.31\% & $0.91\%$ & $0.00\%$ & 20.67\% & $0.00\%$ & $0.00\%$ & $18.13\%$ & $0.00\%$ \\
\hline
\end{tabular}
}
\label{tab:cifar with madry}
\end{table}

\begin{table}[H]
    \centering
    \caption{Natural and robust accuracy of SimCLRv2 \citep{chen2020simple} and ResNet50 \citep{robustness} on CIFAR10 under 20 steps $l_2$ PGD attack. Here \textcolor{cyan}{cyan} columns are adversarial training and white columns are natural training. Adversarial training and \textit{robust} hyper-parameters are obtained by grid search over $\eta$ and $R$ against $l_2(0.25)$, and \textit{natural} hyper-parameters are adopted from \cite{tramer2020differentially}. 
    }
\resizebox{\linewidth}{!}{
\setlength\tabcolsep{1.2pt}
\begin{tabular}{c|acc|acc|acc|acc|ac}
\hline
&\multicolumn{12}{c}{SimCLRv2 pre-trained on unlabelled ImageNet}&\multicolumn{2}{|c}{ResNet50}
\\
\hline
 & DP& DP& DP& DP& DP& DP& DP & DP& DP& Non-DP& Non-DP & Non-DP& Non-DP & Non-DP\\
attack & $ \epsilon=2 $ & $ \epsilon=2 $&$ \epsilon=2 $& $ \epsilon=4 $&$ \epsilon=4 $&$ \epsilon=4 $&$ \epsilon=8 $&$ \epsilon=8 $&$ \epsilon=8 $& $ \epsilon=\infty$& $ \epsilon=\infty$& $ \epsilon=\infty$& $ \epsilon=\infty$& $ \epsilon=\infty$  \\
magnitude&adv 0.25& robust& accurate&adv 0.25& robust& accurate&adv 0.25& robust& accurate&adv 0.25& robust& accurate& adv $0.5$& accurate \\

\hline $\gamma=0$ & 91.07\% & $89.69\%$ & $92.87\%$ & 91.19\% & $90.91\%$ & $93.41\%$ & 91.87\% & $91.22\%$ & $93.64\%$ & 93.88\% & $94.29\%$ & $94.55\%$ & $90.83\%$ & $95.25\%$ \\

$\gamma=0.25$ & 82.69\% & $82.12\%$ & $59.91\%$ & 83.69\% & $83.35\%$ & $74.10\%$ & 84.27\% & $83.77\%$ & $79.03\%$ & 85.20\% & $82.91\%$ & $72.63\%$ & $82.34\%$ & $8.66\%$ \\

$\gamma=0.5$ & 70.54\% & $71.99\%$ & $12.76\%$ & 72.42\% & $72.79\%$ & $40.97\%$ & 72.00\% & $73.08\%$ & $54.53\%$ & 71.22\% & $63.32\%$ & $35.95\%$ & $70.17\%$ & $0.28\%$ \\

$\gamma=1.0$ & 38.57\% & 46.30\% & 9.49\% & 42.97\% & 44.46\% & 8.97\% & 40.17\% & 44.65\% & 9.68\% & 39.74\% & 33.34\% & 0.98\% & 40.47\% & 0.00\% \\

\hline
\end{tabular}
}

    \label{tab:cifar with madry l2}
\end{table}

We evaluate the robust and natural accuracy on the DP models in \citet{tramer2020differentially}, considering two groups of hyper-parameters: the \textit{natural} hyper-parameters reproduced from \citet{tramer2020differentially} that has highest natural accuracy, and the \textit{robust} hyper-parameters from a grid search on $(R,\eta)$ for the highest robust accuracy. From \Cref{tab:cifar with madry} and \Cref{tab:cifar with madry l2}, we see that even under the same privacy constraint (including the non-DP scenario), the robustness from different hyper-parameters can be fundamentally different. For example, DP SimCLR at $\epsilon=2$ can be either very robust ($\approx 70\%$ accuracy at $\gamma=4/255$) or not robust at all ($0.16\%$ accuracy). Consequently, our results may explain the mis-understanding of previous researches by the improper choice of the hyper-parameters.

Scrutinizing the natural training with robust hyper-parameters, we see that, across all $l_\infty$ attack magnitudes $\gamma=\{2/255,4/255,8/255,16/255\}$ and $l_2$ ones $\gamma=\{0,0.25,0.5,1.0\}$, DP SimCLR can be more robust than the non-DP SimCLR, in fact comparable to the adversarially trained ResNet50 that is benchmarked in \citet{robustness} and to the adversarially trained DP SimCLR. To be assured, we further demonstrate that our choice of small $R$ and large $\eta$ is consistently robust on Fashion MNIST, CIFAR10 and CelebA in \Cref{app:more tables}.

\subsection{Pareto optimality on accuracy and robustness}

\begin{figure}[!htb]
\vspace{-.5cm}
\centering
\includegraphics[width=0.55\linewidth]{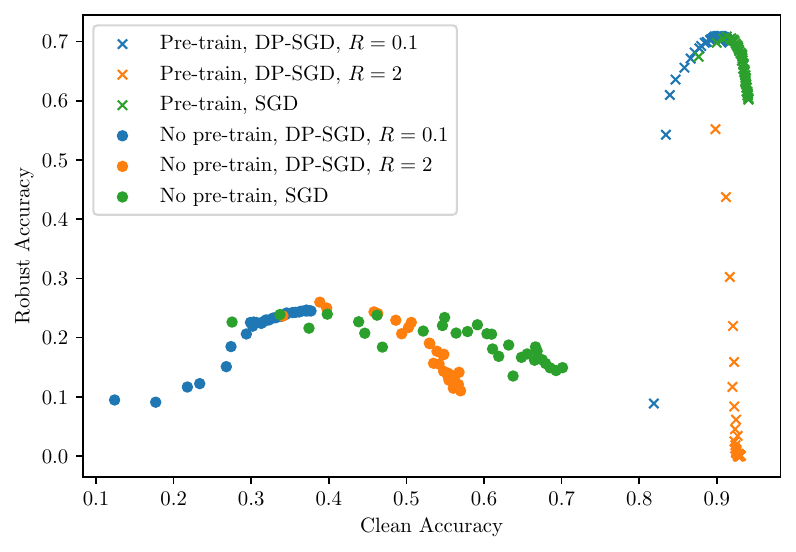}
    \vspace{-0.2cm}
    \caption{Robust and natural accuracy on CIFAR10 at different iterations. Dots are CNN from \citet{papernot2020tempered}. Crosses are SimCLR from \citet{tramer2020differentially}.
    See details in \Cref{app:hyperparameters}.
    }
    \label{fig:rob_acc_vs_clean_acc_cifar10_simclr}
\end{figure}

In the standard non-DP regime, the tradeoff between the accuracy and the robustness is well-known \citep{robustness}. We extend the Pareto statement in \Cref{thm:DP is robust than nat} to DP deep learning, thus adding the privacy dimension into the privacy-accuracy-robustness tradeoff. In \Cref{fig:rob_acc_vs_clean_acc_cifar10_simclr}, we show that two strong DP models on CIFAR10 (one pre-trained, the other not) achieve Pareto optimality with proper hyperparamters, and thus cannot be dominated by any natural classifiers. This can be observed by the fact that no green cross (or dot) is to the top right of all blue crosses (or dots), meaning that any natural classifier may have better robustness or higher accuracy, but not both. Therefore, our observation supports the claim that DP neural networks can be Pareto optimal in terms of the robustness and the accuracy.

\subsection{DP neural networks can be robust against general attacks}

Following the claim in \Cref{sec:small clip regime} that DP neural networks can be robust against $l_2$ and $l_\infty$ PGD attacks, we now demonstrate the transferability of DP neural networks' robustness against different attacks.

\begin{table}[!htb]
\centering
\caption{Natural and robust accuracy of FGSM\citep{goodfellow2014explaining}, BIM\citep{kurakin2018adversarial}, PGD$_{\infty} $\citep{madry2017towards}, APGD$_{\infty} $\citep{croce2020reliable}, PGD$_2 $\citep{madry2017towards} and APGD$_2 $\citep{croce2020reliable} on CIFAR10 under general adversarial attacks. Same model as \Cref{tab:cifar with madry} with the \textit{robust} hyper-parameters. See detailed attack settings in \Cref{app:hyperparameters}.
    }
\setlength\tabcolsep{3pt}
\begin{tabular}{cccccccc}
\hline & Natural & FGSM & BIM & PGD$_{\infty} $ & APGD$_{\infty} $ & PGD$_2 $ & APGD$_2 $ \\
\hline DP , $\epsilon=2$ & $89.86\%$ & $69.73\%$ &$68.85\%$ & $68.85\%$ & $68.85\%$ &$72.10\%$ & $71.97\%$ \\
DP , $\epsilon=4$ & $90.91\%$ & $71.13\%$ & $70.21\%$ & $70.21\%$ & $70.21\%$ & $72.79\%$ & $72.63\%$ \\
DP , $\epsilon=8$ & $91.22\%$ & $71.88\%$ & $71.08\%$ & $71.08\%$ & $71.06\%$  &$73.08\%$ & $72.99\%$\\
Non-DP & $94.29\%$ & $56.24\%$ & $53.56 \%$ & $53.56\%$ & $53.56\%$ & $63.32\%$ & $62.93\%$ \\
\hline
\end{tabular}
    \label{tab:multi-atk-cifar10_rob}
\end{table}

This is interesting in the sense that DP mechanism does not intentionally defend against any adversarial attack, while the adversarial training \citep{goodfellow2014explaining} usually specifically targets a particular attack, e.g. PGD attack is defensed by PGD adversarial training. In \Cref{tab:multi-atk-cifar10_rob}, we attack on the robust models from \Cref{tab:cifar with madry}, with the \textit{robust} hyper-parameters. We consistently observe that DP models can be adversarially robust and more so than the non-DP ones on MNIST/Fashion MNIST/CelebA in \Cref{app:more tables}, if the hyper-parameters $(R,\eta)$ are set properly.

\subsection{Large scale experiments on CelebA face datasets}

We further validate our claims on CelebA \citep{liu2015faceattributes}, a public high-resolution ($178\times 218 $ pixels) image dataset, consisting of over 200,000 real human faces that are supposed to be protected against privacy risks.  We train ResNet18 (\citet{he2016deep}, 11 million parameters) and Vision Transformer (ViT, \citet{dosovitskiy2020image}, 6 million parameters) with DP-RMSprop. Both models are implemented by \citet{rw2019timm} and pretrained on ImageNet. The experiment can be reproduced using the DP vision codebase \texttt{Private Vision} by \citep{bu2022scalable}.

\begin{table}[!htb]
    \centering
    \caption{Natural and robust accuracy on CelebA with label `Smiling', DP-RMSprop, under 20 steps $l_\infty$ PGD attack. 
    Here the hyper-parameters have \textit{not} been carefully searched for the best robustness. 
    See details in \Cref{app:hyperparameters}.
    }
    \setlength\tabcolsep{2.3pt}
\begin{tabular}{c|cccc|cccc}
\hline&\multicolumn{4}{c|}{ResNet18}&\multicolumn{4}{c}{ViT}\\
\hline
attack&  DP& DP& DP& Non-DP & DP& DP& DP&Non-DP \\
magnitude & $ \epsilon=2$&$ \epsilon=4 $ &$ \epsilon=8 $ & $ \epsilon=\infty $&$ \epsilon=2 $&$ \epsilon=4 $&$ \epsilon=8 $& $ \epsilon=\infty$ \\
\hline 
$ \gamma=0 $ &               $80.10\%$ & $85.10\%$ &$88.48\%$ &$91.91\%$  & $92.30\%$ & $92.33\%$ & $92.09\%$& $92.87\%$ \\

$ \gamma=\sfrac{2}{255} $  & $1.26\%$ & $0.47\%$ & $1.03\%$ & $1.19\%$ & $1.42\%$ & $2.02\%$ & $10.35\%$& $0.08\%$ \\

$ \gamma=\sfrac{4}{255} $  & $0.01\%$ & $0.01\%$ & $0.00\%$& $0.00\%$ & $0.00\%$ & $0.00\%$ & $0.00\%$ & $0.00\%$ \\

$ \gamma=\sfrac{8}{255} $ & $0.00\%$ & $0.00\%$ & $0.00\%$ & $0.00\%$ & $0.00\%$ & $0.00\%$ & $0.00\%$ & $0.00\%$ \\

$ \gamma=\sfrac{16}{255} $ & $0.00\%$ & $0.00\%$ & $0.00\%$ & $0.00\%$ & $0.00\%$ & $0.00\%$ & $0.00\%$ & $0.00\%$ \\
\hline
\end{tabular}
    \label{tab:resnet18 and vit}
\end{table}

\begin{table}[!htb]
    \centering
    \caption{Natural and robust accuracy of FGSM\citep{goodfellow2014explaining}, BIM\citep{kurakin2018adversarial}, PGD$_{\infty} $\citep{madry2017towards}, APGD$_{\infty} $\citep{croce2020reliable}, PGD$_2 $\citep{madry2017towards} and APGD$_2 $\citep{croce2020reliable} on CelebA with label `Smiling' under general adversarial attacks. Same ResNet18 as \Cref{tab:resnet18 and vit}. See detailed attack settings in \Cref{app:hyperparameters}.}
\setlength\tabcolsep{3pt}
\begin{tabular}{cccccccc}
\hline & Natural & FGSM & BIM& PGD$_{\infty} $ & APGD$_{\infty} $ & PGD$_2 $ & APGD$_2 $ \\
\hline 
DP , $\epsilon=2$ & $80.10\%$ & $24.47\%$ &$1.24\%$ & $1.26\%$ & $1.18\%$ &$47.02\%$ & $46.25\%$ \\
DP , $\epsilon=4$ & $85.10\%$ & $24.40\%$ & $0.45\%$ & $0.47\%$ & $0.41\%$ & $56.92\%$ & $56.09\%$ \\
DP , $\epsilon=8$ & $88.48\%$ & $29.32\%$ & $0.97\%$ & $1.03\%$ & $0.41\%$  &$57.40\%$ & $56.69\%$\\
Non-DP & $91.91\%$ & $22.94\%$ & $1.09 \%$ & $1.19\%$ & $0.68\%$ & $66.89\%$ & $66.13\%$ \\
\hline
\end{tabular}    
\label{tab:resnet18 general attacks}
\end{table}

\vspace{1cm}
In \Cref{tab:resnet18 and vit} and \Cref{tab:resnet18 general attacks}, we observe that DP ResNet18 and ViT are almost as adversarially robust as their non-DP counterparts, if not more robust. These observations are consistent with those of simpler models on tiny images (c.f. \Cref{tab:cifar with madry} and \Cref{tab:multi-atk-cifar10_rob}). We note that unlike the linear classifiers on CIFAR10 in \Cref{tab:cifar with madry}, training all layers on CelebA in \Cref{tab:resnet18 and vit} are much more vulnerable even at $\gamma=2/255$.

\section{Discussion}


Through the lens of theoretical analysis and extensive experiments, we have shown that differentially private models can be adversarially robust and sometimes even more robust than the naturally trained models. Moreover, DP models can be Pareto optimal in the sense that a more accurate natural model must be less robust (see \Cref{thm:DP is robust than nat} and \Cref{fig:rob_acc_vs_clean_acc_cifar10_simclr}). Our conclusion holds for various attacks with different magnitudes, from linear models to large vision models, from grey-scale images to real face datasets, and from SGD to adaptive optimizers. We not only are the first to reveal this possibility of achieving privacy and robustness simultaneously, but also are the first to offer practical guidelines for such important goal (see \Cref{sec:experiments}). To be concrete, we demonstrate that hyper-parameters -- clipping norm $R$ and learning rate $\eta$ -- exert a lot of influence on the robustness and accuracy, while remaining equally private. We hope that our insights will encourage the practitioners to adopt techniques that protect the privacy and robustness in real-world applications.

For future directions, a more thorough study of private and robustness learning is desirable, by extending to language models, recommendation systems, and so on. We believe a new analysis when all parameters are trainable will be challenging but enlightening. Especially, given that larger models are empirically more accurate under the fixed privacy budget, it would be interesting to understand whether the robustness also improves, or at least persists, with larger model sizes. Another direction is to further investigate the adversarial training with DP optimizers, whose performance may go beyond our Pareto frontier (of the robustness and the accuracy) that is based on the natural training.

\bibliography{main}
\bibliographystyle{tmlr}

\clearpage
\appendix

\section{Proofs}
\label{app:proofs}
\begin{proof}[Proof of \Cref{thm:robust classifier}] [Extended from \citet{xu2021robust}] By \citet[Lemma 2]{xu2021robust}, according to the data symmetry in \eqref{eq:x distribution}, the optimal linear classifier has the form 
$$
1,\cdots,1,b_{\gamma}=\underset{\w,b}{\arg\min} \mathcal{R}_{\gamma}(f(\cdot;\w,b)).
$$
Recall that \eqref{eq:robust error of any intercept} proves that for such linear classifier, the robust error is
\begin{align*}
\mathcal{R}_{\gamma}(f)=\frac{1}{2}\Phi\left(-\frac{\sqrt{d}(\theta-\gamma)}{\sigma}+\frac{1}{\sqrt{d} \sigma} \cdot b\right) 
+\frac{1}{2}\Phi\left(-\frac{\sqrt{d}(\theta-\gamma)}{K \sigma}-\frac{1}{K \sqrt{d} \sigma} \cdot b\right).
\end{align*}
where $\Phi$ is the cumulative distribution function of standard normal.

The optimal $b_\gamma$ to minimize $\mathcal{R}_{\gamma}(f)$ is achieved at the point that $\frac{\partial\mathcal{R}_{\gamma}(f)}{\partial b}=0$. Thus, $b_{\gamma}$ satisfies:
$$\phi\left(-\frac{\sqrt{d}(\theta-\gamma)}{\sigma}+\frac{b_\gamma}{\sqrt{d} \sigma} \right) \cdot \frac{1}{\sqrt{d}\sigma}
-\phi\left(-\frac{\sqrt{d}(\theta-\gamma)}{K \sigma}-\frac{b_\gamma}{K \sqrt{d} \sigma} \right)\cdot \frac{1}{K \sqrt{d} \sigma}=0$$
where $\phi$ is the probability density function of standard normal. This equals to
$$\phi\left(-\frac{\sqrt{d}(\theta-\gamma)}{\sigma}+\frac{b_\gamma}{\sqrt{d} \sigma} \right)
=\phi\left(-\frac{\sqrt{d}(\theta-\gamma)}{K \sigma}-\frac{b_\gamma}{K \sqrt{d} \sigma} \right)/K$$
and
\begin{align*}
K&=\phi\left(-\frac{\sqrt{d}(\theta-\gamma)}{K \sigma}-\frac{b_\gamma}{K \sqrt{d} \sigma} \right)/\phi\left(-\frac{\sqrt{d}(\theta-\gamma)}{\sigma}+\frac{b_\gamma}{\sqrt{d} \sigma} \right)
\\
&=e^{-\frac{1}{2}\left[\left(-\frac{\sqrt{d}(\theta-\gamma)}{K \sigma}-\frac{b_\gamma}{K \sqrt{d} \sigma} \right)^2-\left(-\frac{\sqrt{d}(\theta-\gamma)}{\sigma}+\frac{b_\gamma}{\sqrt{d} \sigma} \right)^2\right]}
\end{align*}
It is not hard to see
$$-2\log K=\left(-\frac{\sqrt{d}(\theta-\gamma)}{K \sigma}-\frac{b_\gamma}{K \sqrt{d} \sigma} \right)^2-\left(-\frac{\sqrt{d}(\theta-\gamma)}{\sigma}+\frac{b_\gamma}{\sqrt{d} \sigma} \right)^2$$
which re-arranges to a quadratic equation
$$b_\gamma^2 \frac{1}{ d\sigma^2}(1-\frac{1}{K^2})-b_\gamma \frac{2(\theta-\gamma)}{\sigma^2}(1+\frac{1}{K^2})+\frac{d(\theta-\gamma)^2}{\sigma^2}(1-\frac{1}{K^2})=2\log K.$$
The solution is therefore explicit as

$$
b_{\gamma}=\frac{K^{2}+1}{K^{2}-1} d (\theta-\gamma)-K \sqrt{\frac{4d^{2} (\theta-\gamma)^{2}}{\left(K^{2}-1\right)^{2}}+d \sigma^{2} q(K)},
$$
where $ q(K)=\frac{2 \log K}{K^{2}-1} $ which is a positive constant and only depends on $K$. By incorporating $b_{\gamma}$ into \eqref{eq:robust error of any intercept}, we can get the optimal robust error $\mathcal{R}_{\gamma}\left(f_{\gamma}\right)$:
\begin{align*}
\mathcal{R}_{\gamma}\left(f_{\gamma}\right)=\frac{1}{2}\Phi\left(B(K,\gamma)-K \sqrt{B(K,\gamma)^{2}+q(K)}\right) +\frac{1}{2}\Phi\left(-K B(K,\gamma)+\sqrt{B(K,\gamma)^{2}+q(K)}\right),
\end{align*}
where $B(K,\gamma)=\frac{2}{K^{2}-1} \frac{\sqrt{d} (\theta-\gamma)}{\sigma} $.
\end{proof}

\begin{proof}[Proof of \Cref{thm:DP is robust than nat}]
We denote the two roots of $\frac{\partial\mathcal{R}_\gamma(f(b))}{\partial b}=0$ as $b_\gamma^+$ and $b_\gamma^-$. Here $b_\gamma\equiv b_\gamma^-$. Clearly $\mathcal{R}_\gamma(b)$ is increasing in $(b_\gamma^-,b_\gamma^+)$. We hope to show $b_0\in(b_\gamma^-,b_\gamma^+) \forall \gamma>0$, so that $\mathcal{R}_\gamma(b)$ is also increasing in $(b_\gamma^-,b_0)$.

Note their Equation (17)
\begin{align*}
\mathcal{R}_\gamma(b)=\frac{1}{2}\Phi(-\frac{\sqrt{d}(\theta-\gamma)}{\sigma}+\frac{1}{\sqrt{d}\sigma}b)+\frac{1}{2}\Phi(-\frac{\sqrt{d}(\theta-\gamma)}{K\sigma}-\frac{1}{K\sqrt{d}\sigma}b)
\end{align*}
Taking derivative w.r.t. $b$
\begin{align*}
\frac{\partial\mathcal{R}_\gamma(b)}{\partial b}=\frac{1}{2\sqrt{d}\sigma}\phi(-\frac{\sqrt{d}(\theta-\gamma)}{\sigma}+\frac{1}{\sqrt{d}\sigma}b)-\frac{1}{2K\sqrt{d}\sigma}\phi(-\frac{\sqrt{d}(\theta-\gamma)}{K\sigma}-\frac{1}{K\sqrt{d}\sigma}b)
\end{align*}
Setting this derivative to 0:
\begin{align*}
0=K\phi(-\frac{\sqrt{d}(\theta-\gamma)}{\sigma}+\frac{1}{\sqrt{d}\sigma}b)-\phi(-\frac{\sqrt{d}(\theta-\gamma)}{K\sigma}-\frac{1}{K\sqrt{d}\sigma}b)
\end{align*}
which means
\begin{align*}
\frac{\phi(-\frac{\sqrt{d}(\theta-\gamma)}{K\sigma}-\frac{1}{K\sqrt{d}\sigma}b)}{\phi(-\frac{\sqrt{d}(\theta-\gamma)}{\sigma}+\frac{1}{\sqrt{d}\sigma}b)}=K
\end{align*}
Using the standard normal density $\phi(u)=e^{-u^2/2}$ and $\frac{\phi(u)}{\phi(v)}=e^{(v^2-u^2)/2}$, we have
\begin{align*}
&(-\frac{\sqrt{d}(\theta-\gamma)}{\sigma}+\frac{1}{\sqrt{d}\sigma}b)^2-(-\frac{\sqrt{d}(\theta-\gamma)}{K\sigma}-\frac{1}{K\sqrt{d}\sigma}b)^2=2\log K
\\
\Longrightarrow& K^2(-d(\theta-\gamma)+b)^2-(-d(\theta-\gamma)-b)^2=2d\sigma^2 K^2\log K
\\
\Longrightarrow& (K^2-1)b^2-2d(\theta-\gamma)(K^2+1)b+d^2(\theta-\gamma)^2(K^2-1)-2d\sigma^2 K^2\log K=0
\end{align*}
By $x=-\frac{b}{2a}\pm \frac{\sqrt{b^2-4ac}}{2a}=-\frac{b}{2a}\pm \sqrt{(\frac{b}{2a})^2-\frac{c}{a}}$, we know 
\begin{align*}
b_\gamma^\pm
&=\frac{K^{2}+1}{K^{2}-1} d (\theta-\gamma)\pm \sqrt{(\frac{K^{2}+1}{K^{2}-1} d (\theta-\gamma))^2-d^2(\theta-\gamma)^2+K^2 d\sigma^2 q(K)}
\\
&=\frac{K^{2}+1}{K^{2}-1} d (\theta-\gamma)\pm K \sqrt{\frac{4d^{2} (\theta-\gamma)^{2}}{\left(K^{2}-1\right)^{2}}+d \sigma^{2} q(K)}
\end{align*}

We now derive the sufficient condition that $b_0<b_{\gamma}^+$:
$$  \frac{K^{2}+1}{K^{2}-1} d (\theta)-K \sqrt{\frac{4d^{2} (\theta)^{2}}{\left(K^{2}-1\right)^{2}}+d \sigma^{2} q(K)}<\frac{K^{2}+1}{K^{2}-1} d (\theta-\gamma)+K \sqrt{\frac{4d^{2} (\theta-\gamma)^{2}}{\left(K^{2}-1\right)^{2}}+d \sigma^{2} q(K)}.$$
This is equivalent to
$$\frac{K^{2}+1}{K^{2}-1} d \gamma<K\left( \sqrt{\frac{4d^{2} (\theta-\gamma)^{2}}{\left(K^{2}-1\right)^{2}}+d \sigma^{2} q(K)}+ \sqrt{\frac{4d^{2} \theta^{2}}{\left(K^{2}-1\right)^{2}}+d \sigma^{2} q(K)}\right).$$
Therefore, it suffices to have
$$\frac{K^2+1}{2K}\gamma<|\theta-\gamma|+|\theta|$$

Finally, it is easy to see the Pareto statement $\mathcal{R}_0(f)<\mathcal{R}_0(f_\text{DP})\longrightarrow\mathcal{R}_\gamma(f)>\mathcal{R}_\gamma(f_\text{DP})$. A necessary but not sufficient condition for $\mathcal{R}_0(f)<\mathcal{R}_0(f_\text{DP})$ given that $b_0>b_\text{DP}$ is $b>b_\text{DP}$, since $b_0$ is a minimizer which means $\mathcal{R}_0$ is decreasing on the interval $(-\infty,b_0)$. Similarly, $\mathcal{R}_\gamma$ is increasing on the right of $b_\gamma$ and thus $b$ has higher robust error.
\end{proof}

\begin{proof}[Proof of \Cref{cor:l2 attacks}]
We can characterize the robust errors based on $l_2$ attacks in a similar fashion to \eqref{eq:robust error of any intercept}. We notice that
\begin{align*}
\mathcal{R}_{\gamma}(f)=& \P(\exists\|\p\|_2 \leq \epsilon \text { s.t. } f(\x+\p) \neq y) = \max _{\|\p\|_2 \leq \gamma} \P(f(\x+\p) \neq y) \\=& \frac{1}{2} \P(f(\x+\bm\gamma_d/\sqrt{d}) \neq-1 \mid y=-1) +\frac{1}{2} \P(f(\x-\bm\gamma_d/\sqrt{d}) \neq+1 \mid y=+1) 
\end{align*}
In short, the same analysis is in place except $\gamma\to\gamma/\sqrt{d}$ when we switch from $l_\infty$ to $l_2$ attacks.
\end{proof}
\section{Ablation Studies}
\label{app: ablation}

\subsection{CelebA}
\vspace{-0.3cm}
\begin{figure}[!htb]
    \centering
    \includegraphics[width=0.38\linewidth]{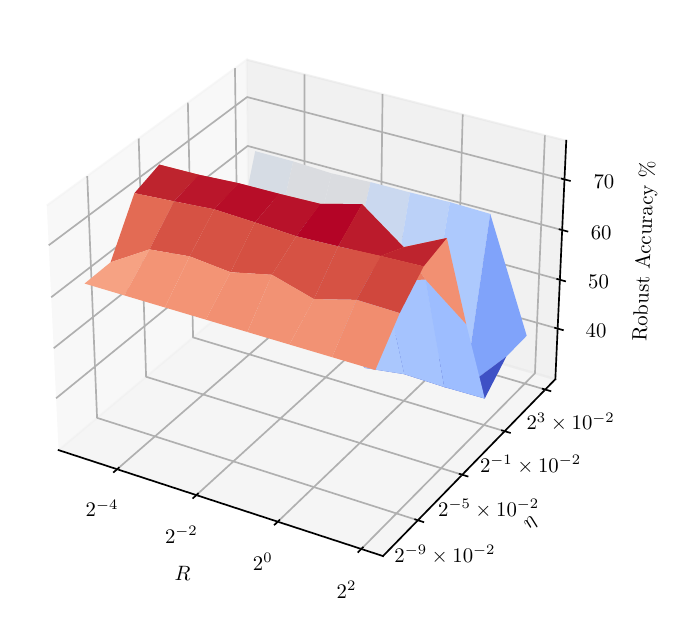}
    \includegraphics[width=0.38\linewidth]{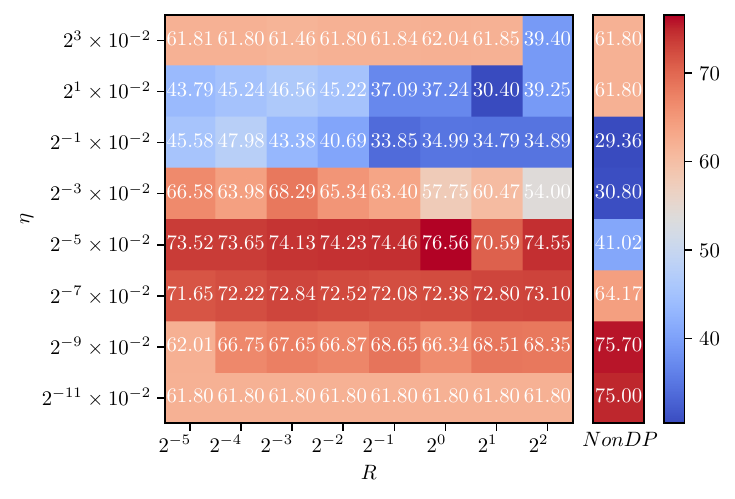} \\
    \vspace{-0.15cm}
    \includegraphics[width=0.4\linewidth]{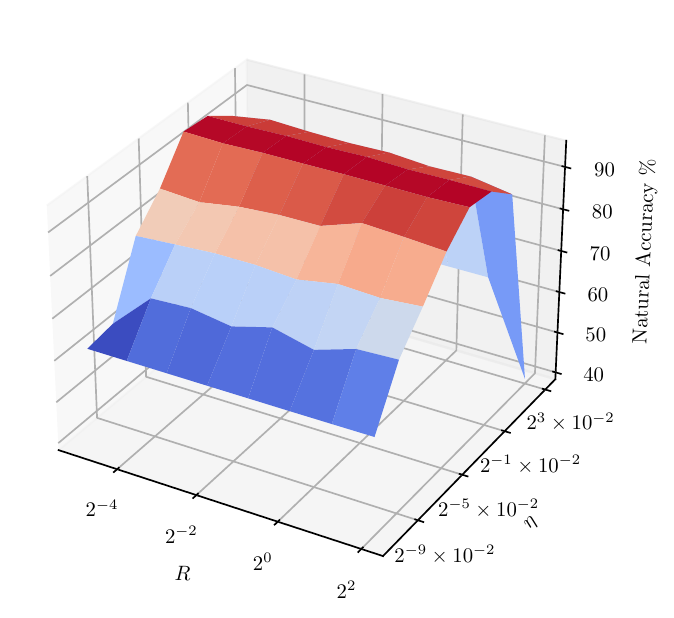}
    \includegraphics[width=0.4\linewidth]{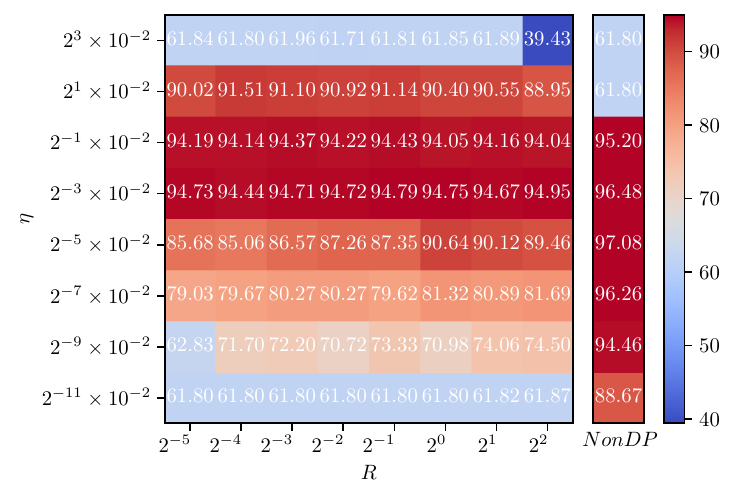}
    \vspace{-0.2cm}
    \caption{Robust and natural accuracy of $\eta$ and $R$ on CelebA with label `Male'. We train a 2-layer CNN 
    using DP-Adam and attack by $l_\infty(2/255)$ PGD attack. Same as in \Cref{fig:params-matter-celeba-adam-loglog-shu}. Here $\epsilon=2$, batch size $=512$, epochs $=10$.}
    \label{fig:params-matter-celeba-adam-loglog}
\end{figure}

\vspace{-0.4cm}
\begin{figure}[!htb]
\centering
\includegraphics[width=0.42\linewidth]{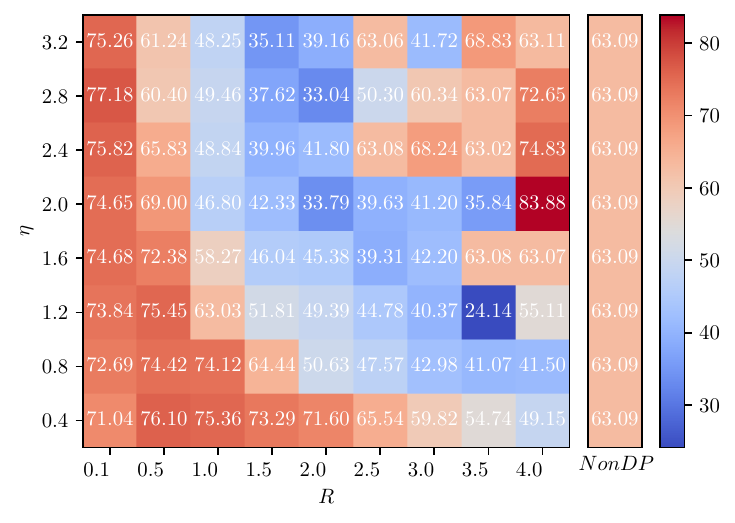}
\includegraphics[width=0.42\linewidth]{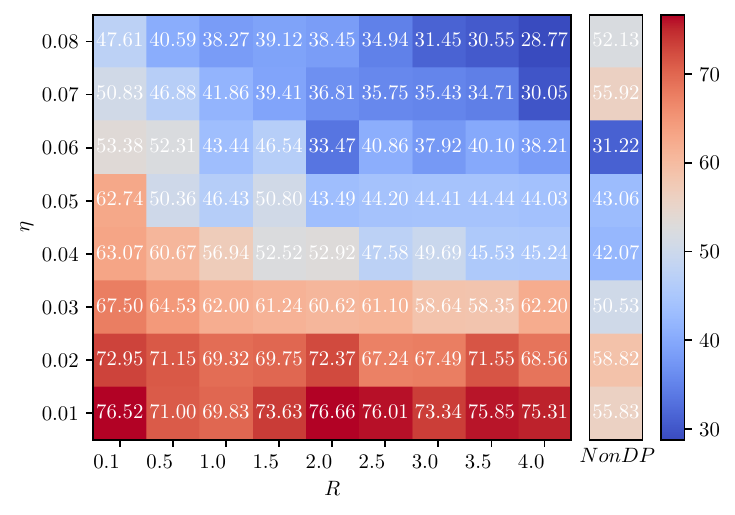} \\
\includegraphics[width=0.42\linewidth]{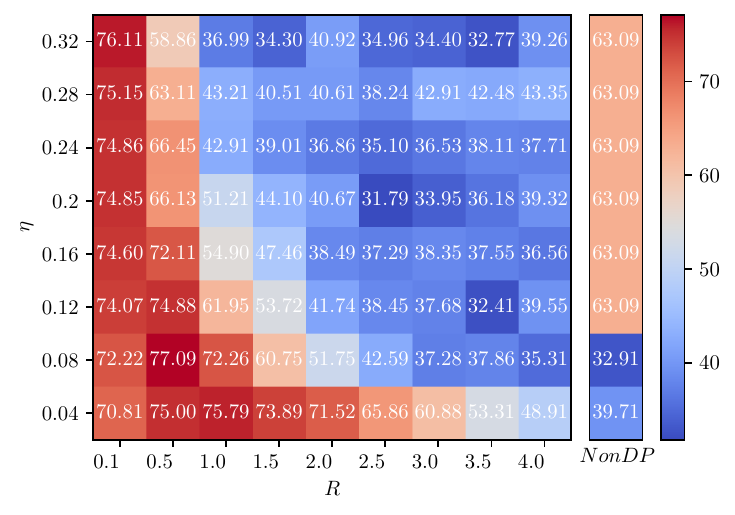}
\includegraphics[width=0.47\linewidth]{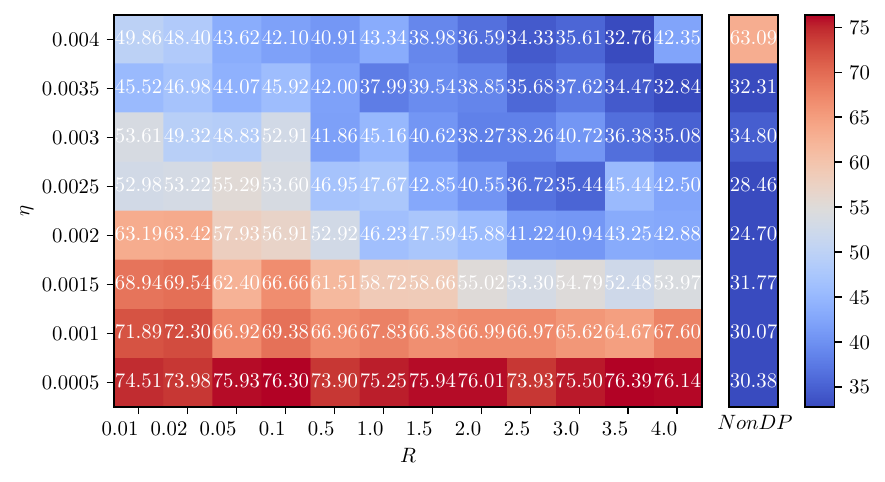}
\caption{Robust accuracy of CelebA with label `Male' under different optimizer, trained with a 2-layer CNN 
and attacked by $l_\infty(2/255)$ PGD attack. Top left: SGD. Top right: Adagrad. Bottom left: SGD momentum. Bottom right: Adam. Here $\epsilon=2$, batch size $=512$, epochs $=10$.}
\label{fig:optimizers CelebA}
\end{figure}

\begin{figure}[!htb]
    \centering
    \includegraphics[width=0.38\linewidth]{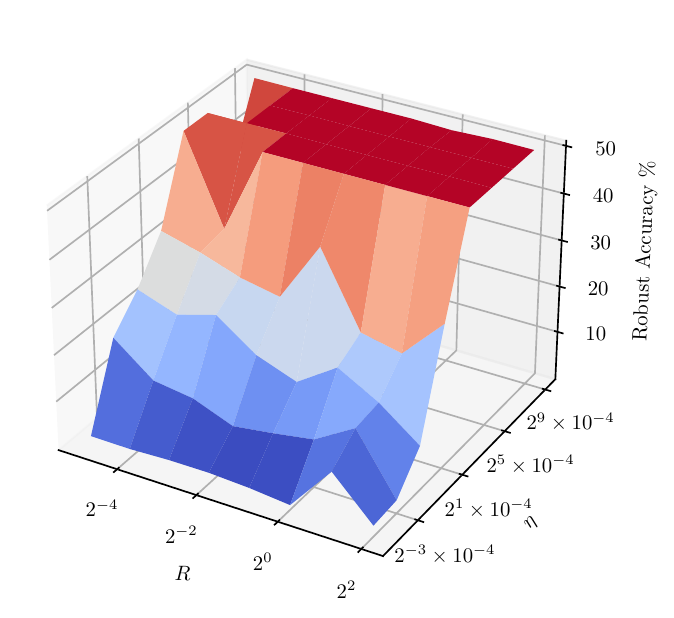}
    \includegraphics[width=0.38\linewidth]{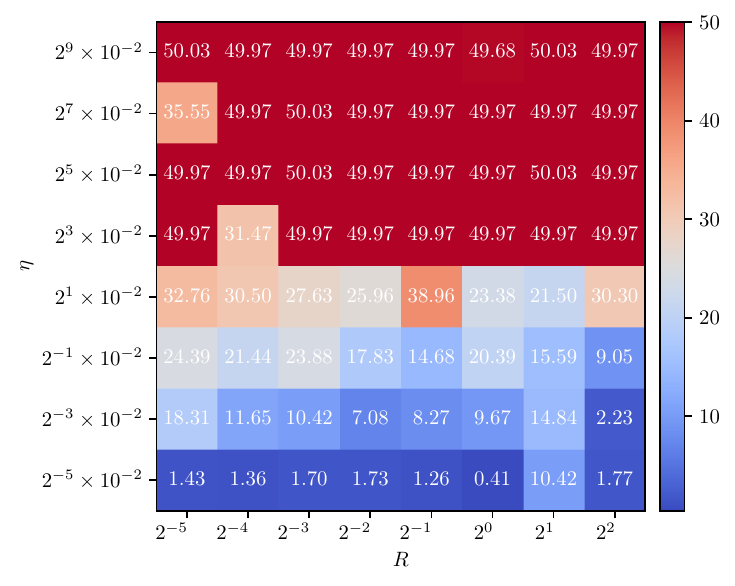} \\
\vspace{-0.15cm}
    \includegraphics[width=0.4\linewidth]{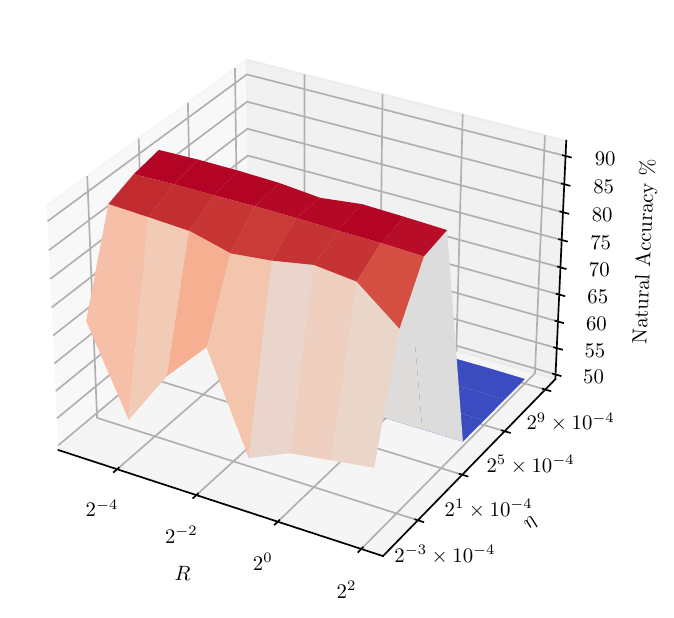}
    \includegraphics[width=0.4\linewidth]{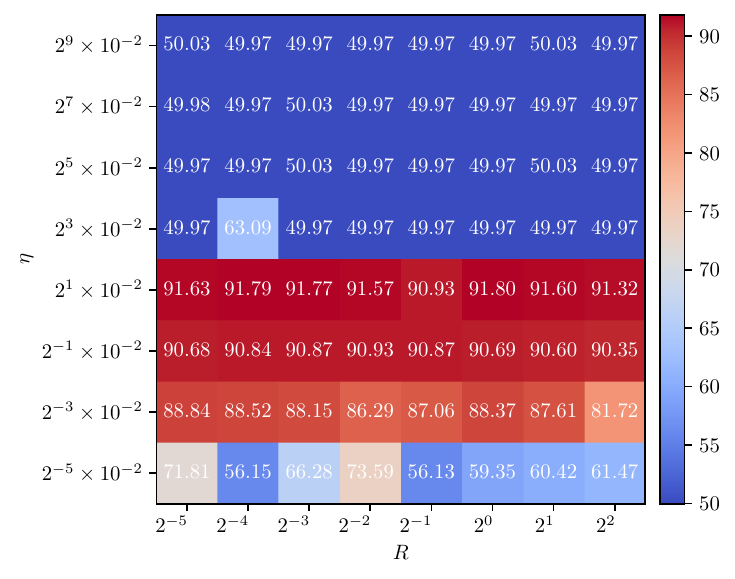}
\vspace{-0.2cm}
    \caption{Robust and natural accuracy of $\eta$ and $R$ on CelebA with label `Smiling'. We train ViT-tiny using DP-RMSprop and attack by $l_\infty(2/255)$ PGD attack. Here $\epsilon=2$, batch size $=1024$, epoch $=1$.}
    \label{fig:params-matter-celeba-adam-loglog22}
\end{figure}

\clearpage
\vspace{-0.2cm}
\subsection{CIFAR10}
\vspace{-0.2cm}
\begin{figure}[!htb]
    \centering
    \includegraphics[width=0.37\linewidth]{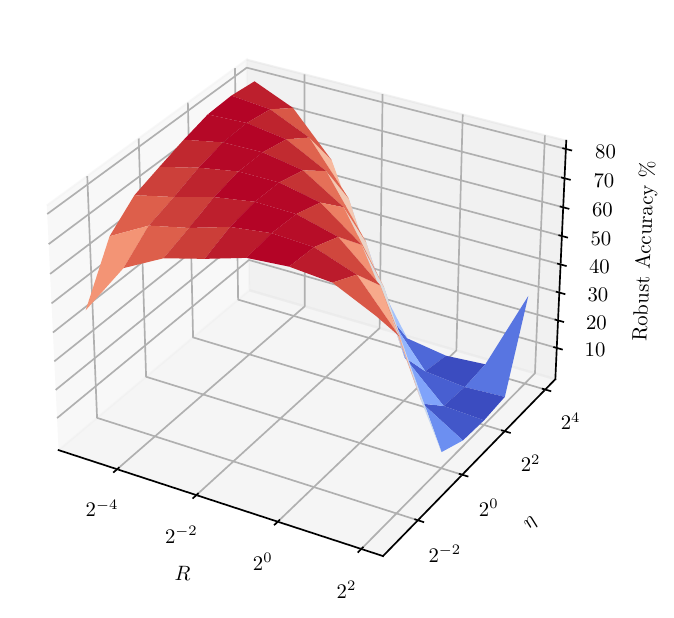}
    \includegraphics[width=0.37\linewidth]{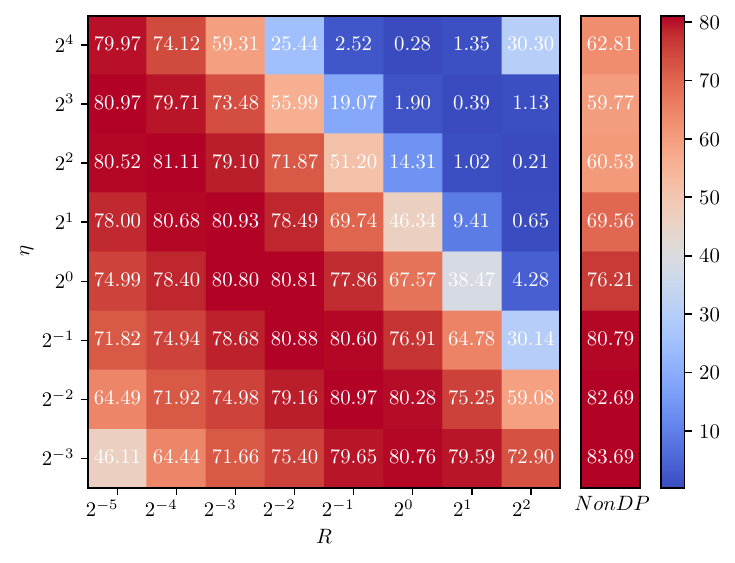} \\
    \vspace{-0.15cm}
    \includegraphics[width=0.37\linewidth]{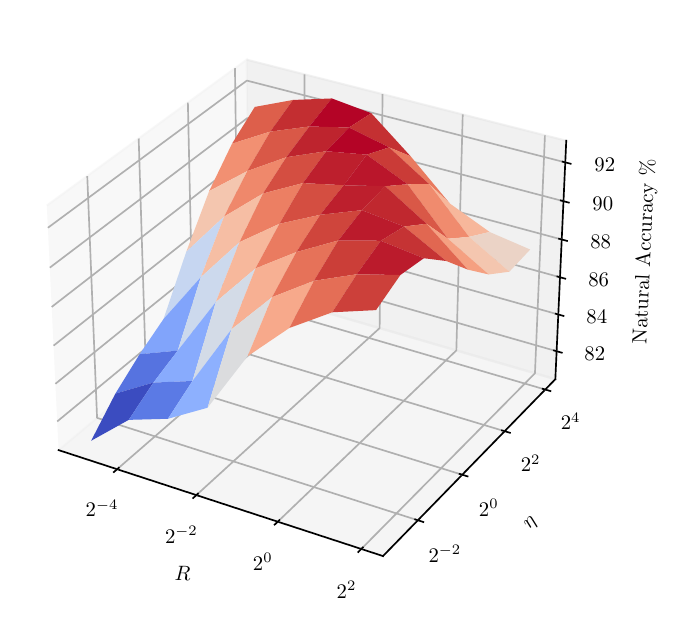}
    \includegraphics[width=0.37\linewidth]{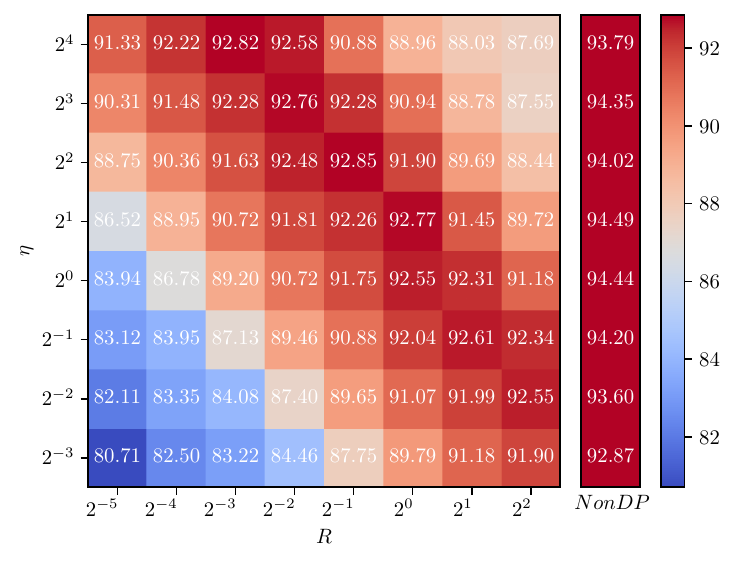}
\vspace{-0.4cm}
    \caption{Robust and clean accuracy of $\eta$ and $R$ on CIFAR10, transferred from SimCLRv2 pre-trained on unlabelled ImageNet. We use DP-SGD and attack by $l_\infty(2/255)$ PGD attack. Here $\epsilon=2$, batch size $=1024$, epochs $=50$.}
    \label{fig:params-matter-cifar10-simclr-sgd-loglog}
\end{figure}

\clearpage
\subsection{MNIST}
\vspace{-0.2cm}
\begin{figure}[!htb]
    \centering
    \includegraphics[width=0.38\linewidth]{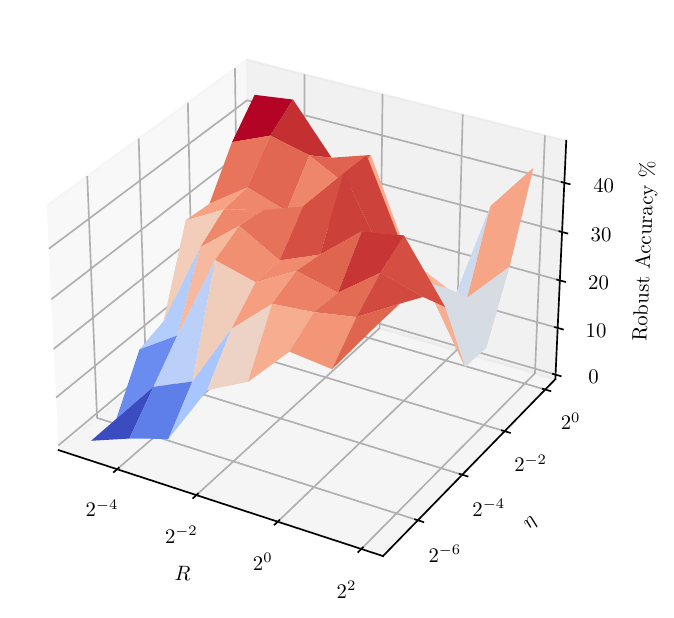}
    \includegraphics[width=0.38\linewidth]{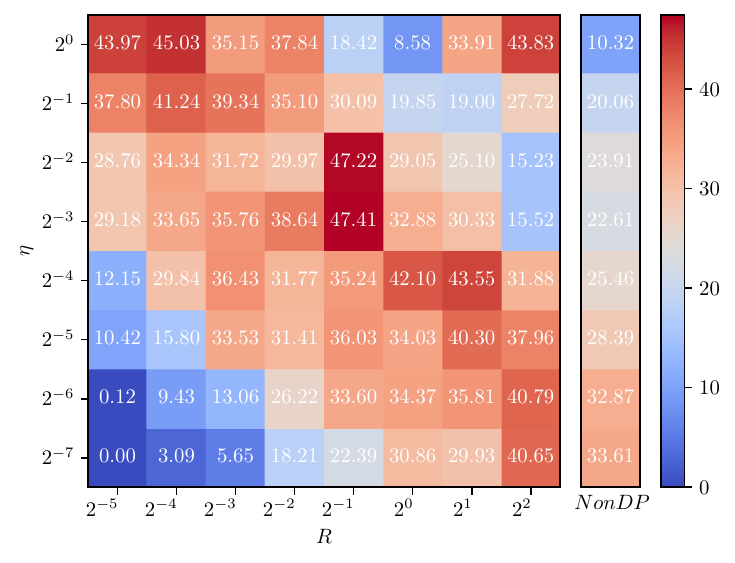} \\\vspace{-0.2cm}
    \includegraphics[width=0.38\linewidth]{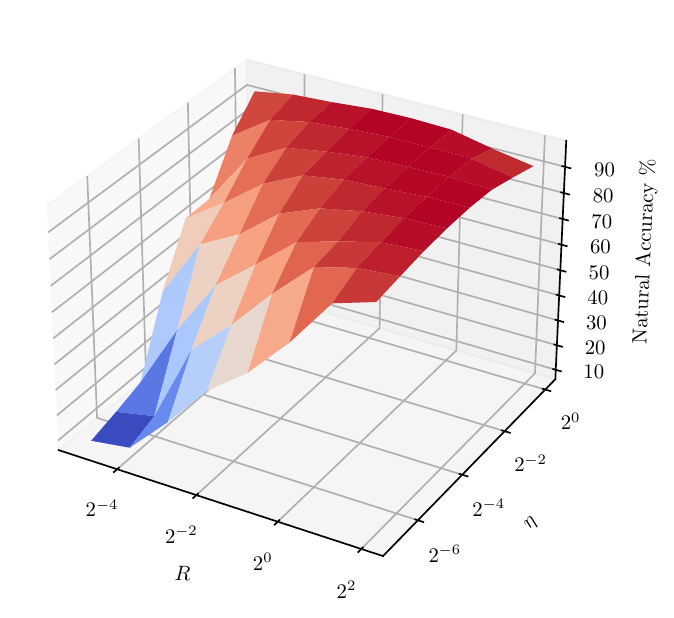}
    \includegraphics[width=0.38\linewidth]{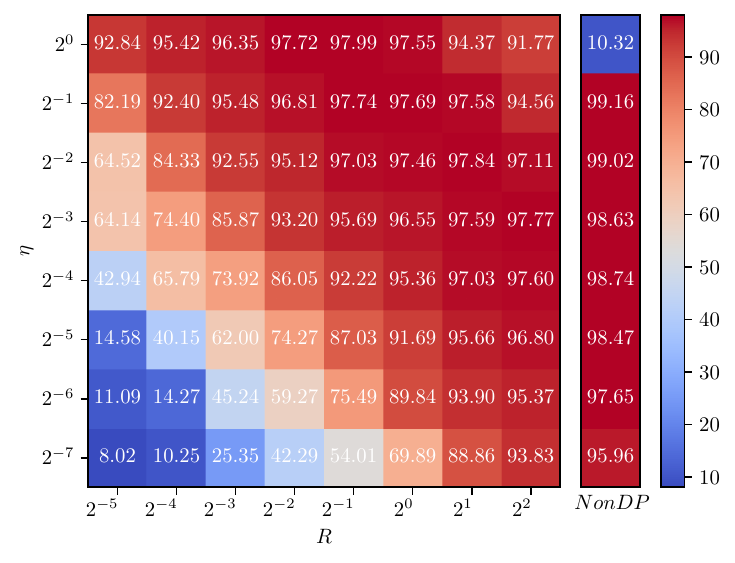}
\vspace{-0.4cm}
    \caption{Robust and clean accuracy of $\eta$ and $R$ on MNIST. We train the CNN from \citet{tramer2020differentially} using DP-SGD and attack by $l_\infty(32/255)$ PGD attack. Here $\epsilon=2$, batch size $=512$, epochs $=40$.} \label{fig:params-matter-mnist-lr0.8}
\end{figure}

\clearpage
\vspace{-0.2cm}
\subsection{Fashion MNIST}
\vspace{-0.2cm}
\begin{figure}[!htb]
    \centering
    \includegraphics[width=0.38\linewidth]{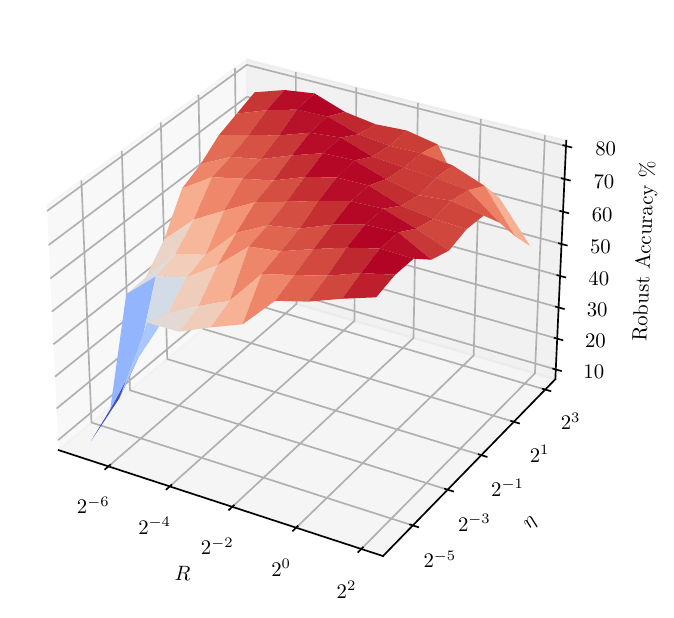}
    \includegraphics[width=0.38\linewidth]{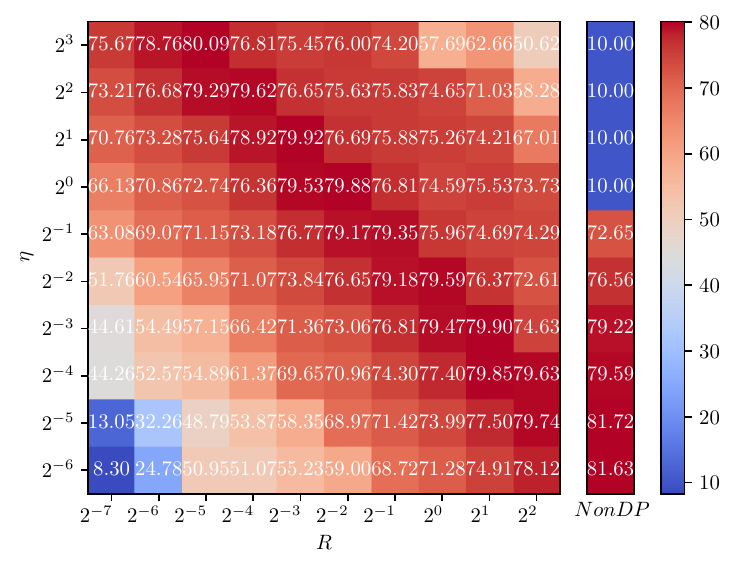} \\
    \vspace{-0.15cm}
\includegraphics[width=0.38\linewidth]{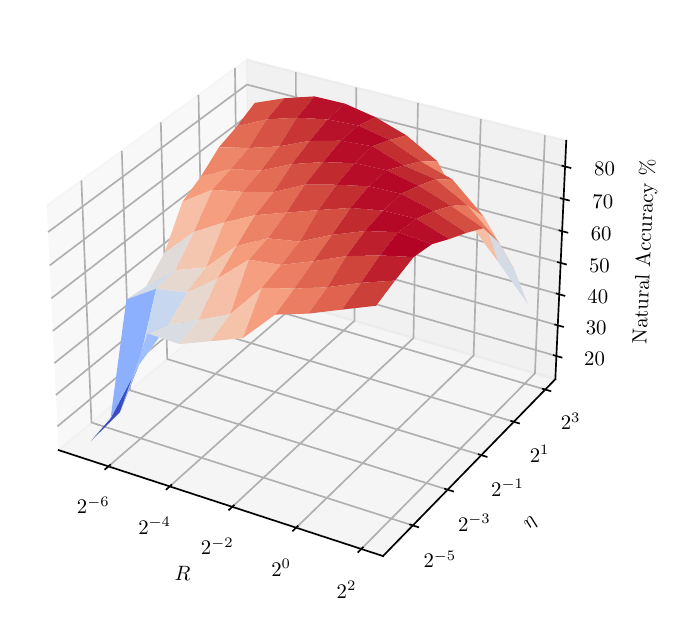}
    \includegraphics[width=0.38\linewidth]{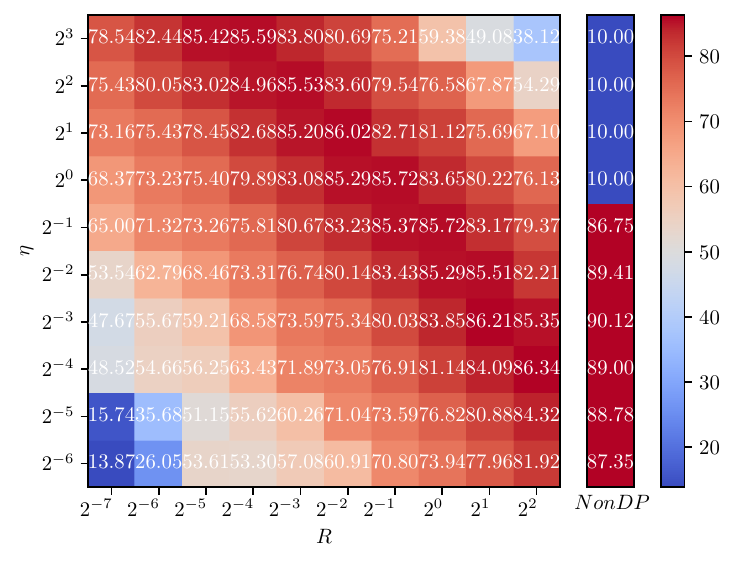}
    \vspace{-0.3cm}
    \caption{Robust and clean accuracy of $\eta$ and $R$ on Fashion MNIST. We train the CNN from \citet{tramer2020differentially} using DP-SGD and attack by $l_\infty(2/255)$ PGD attack. Here $\epsilon=2$, batch size $=2048$, epochs $=40$.}
    \label{fig:params-matter-fmnist-sgdmtm-loglog}
\end{figure}

\clearpage
\section{More tables}
\label{app:more tables}

\begin{table}[!htb]
    \centering
\begin{tabular}{|c|c|c|c|c|c||c|c|}
\hline & Natural & FGSM & BIM& PGD$_{\infty} $ & APGD$_{\infty} $ & PGD$_2 $ & APGD$_2 $  \\
\hline Non-DP & $94.55\%$ & $18.71\%$ & $15.97 \%$ & $15.96\%$ & $16.04\%$ & $35.95\%$ & $35.89\%$ \\
\hline DP , $\epsilon=2$ & $92.73\%$ & $10.35\%$ & $0.03\%$ & $0.03\%$ & $0.03\%$ &$12.76\%$ & $12.68\%$ \\
\hline DP , $\epsilon=4$ & $93.49\%$ & $30.10\%$ & $9.10\%$ & $9.09\%$ & $9.12\%$ &$40.97\%$ & $41.01\%$ \\
\hline DP , $\epsilon=8$ & $93.74\%$ & $31.86\%$ & $28.08\%$ & $28.09\%$ & $28.09\%$ &$54.53\%$ & $54.54\%$ \\
\hline
\end{tabular}
    \caption{Natural and robust accuracy of models transferred from unlabelled ImageNet pre-trained SIMCLRv2 on CIFAR10 under general adversarial attacks with $\gamma_{\infty}=4/255$ and $\gamma_{2}=0.5$. Attack steps are 20 if applicable. Model hyper-parameters are directly adopted from \citet{tramer2020differentially} for highest natural accuracy. DP models are trained using DP-SGD, $R=0.1$, $\eta_{DP}=4$, momentum $=0.9$, batch size $=1024$. Non-DP models are trained using SGD with the same hyper-parameters except $\eta_{non-DP}=0.4$.}
    \label{tab:multi-atk-handcrafted-cifar10}
\end{table}

\begin{table}[!htb]
    \centering
\begin{tabular}{|c|c|c|c|c|}
\hline& Non-DP & DP& DP& DP\\
attack magnitude& $ \epsilon=\infty$ & $ \epsilon=2 $&$ \epsilon=4 $&$ \epsilon=8 $ \\
\hline $ \gamma=0.0 $ & $ 99.24 \%$ & $ 98.01 \%$& $ 98.32 \%$& $ 98.50 \%$ \\
\hline $ \gamma=0.25 $ & $ 97.57 \%$ & $ 95.29 \%$ & $ 95.94 \%$& $ 96.65 \%$\\
\hline $ \gamma=0.5 $ & $ 93.32 \%$ & $ 90.28 \%$ & $ 91.71 \%$& $ 92.97 \%$\\
\hline $ \gamma=1.0 $ & $ 66.58 \% $ & $ 63.95 \%$& $ 73.32 \%$ & $ 77.08 \%$\\
\hline $ \gamma=2.0 $ & $ 36.28 \%$ & $ 39.88 \%$ & $ 51.48 \%$& $ 52.74 \%$\\
\hline
\end{tabular}
    \caption{Robust accuracy on MNIST under 20 steps $l_2$ PGD attack. Model hyper-parameters are directly adopted from \citet{tramer2020differentially} for highest natural accuracy. DP models are trained using DP-SGD, $R=0.1$, $\eta_{DP}=0.5$, momentum $=0.9$, batch size $=512$. Non-DP models are trained using SGD with the same hyper-parameters except $\eta_{non-DP}=0.05$.}
    \label{tab:l2-atk-handcrafted-mnist}
\end{table}

\begin{table}[!htb]
    \centering
\begin{tabular}{|c|c|c|c|c|}
\hline& Non-DP & DP& DP& DP\\
attack magnitude& $ \epsilon=\infty$ & $ \epsilon=2 $&$ \epsilon=4 $&$ \epsilon=8 $ \\
\hline $ \gamma=0.0 $ & $ 99.24 \%$ & $ 98.01 \%$& $ 98.32 \%$& $ 98.50 \%$ \\
\hline $ \gamma=2/255 $ & $ 98.73 \%$ & $ 97.12 \%$ & $ 97.43 \%$& $ 97.84 \%$\\
\hline $ \gamma=4/255 $ & $ 97.88 \%$ & $ 95.78 \%$ & $ 96.32 \%$& $ 97.13 \%$\\
\hline $ \gamma=8/255 $ & $ 95.32 \% $ & $ 92.31 \%$& $ 93.51 \%$ & $ 94.74 \%$\\
\hline $ \gamma=16/255 $ & $ 82.06 \%$ & $ 77.67 \%$ & $ 80.28 \%$& $ 85.82 \%$\\
\hline
\end{tabular}
    \caption{Robust accuracy on MNIST under 20 steps $l_{\infty}$ PGD attack. Model hyper-parameters are directly adopted from \citet{tramer2020differentially} for highest natural accuracy. DP models are trained using DP-SGD, $R=0.1$, $\eta_{DP}=0.5$, momentum $=0.9$, batch size $=512$. Non-DP models are trained using SGD with the same hyper-parameters except $\eta_{non-DP}=0.05$.}
    \label{tab:linf-atk-handcrafted-mnist}
\end{table}

\begin{table}[!htb]
    \centering
\begin{tabular}{|c|c|c|c|c|c||c|c|}
\hline & Natural & FGSM & BIM& PGD$_{\infty} $ & APGD$_{\infty} $ & PGD$_2 $ & APGD$_2 $ \\
\hline Non-DP & $99.24\%$ & $97.92\%$ & $97.88\%$ & $97.88\%$ & $97.77\%$ & $93.32\%$ & $93.27\%$ \\
\hline DP , $\epsilon=2$ & $98.01\%$ & $95.89\%$ & $95.80\%$ & $95.79\%$ & $95.63\%$ &$90.28\%$ & $90.15\%$ \\
\hline DP , $\epsilon=4$ & $98.32\%$ & $96.45\%$ & $96.32\%$ & $96.33\%$ & $96.27\%$ &$91.71\%$ & $91.68\%$ \\
\hline DP , $\epsilon=8$ & $98.50\%$ & $97.19\%$ & $97.15\%$ & $97.15\%$ & $97.06\%$ &$92.97\%$ & $92.94\%$ \\
\hline
\end{tabular}
    \caption{Natural and robust accuracy of CNN models on MNIST under general adversarial attacks with $\gamma_{\infty}=4/255$ and $\gamma_{2}=0.5$. Attack steps are 20 if applicable. Model hyper-parameters are directly adopted from \citet{tramer2020differentially} for highest natural accuracy. DP models are trained using DP-SGD, $R=0.1$, $\eta_{DP}=0.5$, momentum $=0.9$, batch size $=512$. Non-DP models are trained using SGD with the same hyper-parameters except $\eta_{non-DP}=0.05$.}
    \label{tab:multi-atk-handcrafted-mnist}
\end{table}

\begin{table}[!htb]
    \centering
\begin{tabular}{|c|c|c|c|c|}
\hline& Non-DP & DP& DP& DP\\
attack magnitude& $ \epsilon=\infty$ & $ \epsilon=2 $&$ \epsilon=4 $&$ \epsilon=8 $ \\
\hline $ \gamma=0.0 $ & $ 89.75 \%$ & $ 85.95 \%$& $ 86.60 \%$& $ 86.74 \%$ \\
\hline $ \gamma=0.25 $ & $ 57.37 \%$ & $ 69.24 \%$ & $ 72.93 \%$& $ 75.35 \%$\\
\hline $ \gamma=0.5 $ & $ 25.21 \%$ & $ 46.09 \%$ & $ 54.30 \%$& $ 59.23 \%$\\
\hline $ \gamma=1.0 $ & $ 7.87 \% $ & $ 16.77 \%$& $ 25.95 \%$ & $ 29.08 \%$\\
\hline $ \gamma=2.0 $ & $ 7.47 \%$ & $ 11.77 \%$ & $ 16.85 \%$& $ 17.00 \%$\\
\hline
\end{tabular}
    \caption{Robust accuracy on Fashion MNIST under 20 steps $l_{2}$ PGD attack. Model hyper-parameters are directly adopted from \citet{tramer2020differentially} for highest natural accuracy. DP models are trained using DP-SGD, $R=0.1$, $\eta_{DP}=4$, momentum $=0.9$, batch size $=2048$. Non-DP models are trained using SGD with the same hyper-parameters except $\eta_{non-DP}=0.4$.}
    \label{tab:l2-atk-handcrafted-fmnist}
\end{table}

\begin{table}[!htb]
    \centering
\begin{tabular}{|c|c|c|c|c|}
\hline& Non-DP & DP& DP& DP\\
attack magnitude& $ \epsilon=\infty$ & $ \epsilon=2 $&$ \epsilon=4 $&$ \epsilon=8 $ \\
\hline $ \gamma=0.0 $ & $ 89.75 \%$ & $ 85.95 \%$& $ 86.60 \%$& $ 86.74 \%$ \\
\hline $ \gamma=2/255 $ & $ 76.19 \%$ & $ 78.29 \%$ & $ 79.84 \%$& $ 81.47 \%$\\
\hline $ \gamma=4/255 $ & $ 64.46 \%$ & $ 69.75 \%$ & $ 72.60 \%$& $ 74.72 \%$\\
\hline $ \gamma=8/255 $ & $ 47.24 \% $ & $ 54.62 \%$& $ 57.87 \%$ & $ 60.52 \%$\\
\hline $ \gamma=16/255 $ & $ 23.26 \%$ & $ 28.51 \%$ & $ 31.68 \%$& $ 30.90 \%$\\
\hline
\end{tabular}
    \caption{Robust accuracy on Fashion MNIST under 20 steps $l_{\infty}$ PGD attack. Model hyper-parameters are directly adopted from \citet{tramer2020differentially} for highest natural accuracy. DP models are trained using DP-SGD, $R=0.1$, $\eta_{DP}=4$, momentum $=0.9$, batch size $=2048$. Non-DP models are trained using SGD with the same hyper-parameters except $\eta_{non-DP}=0.4$.}
    \label{tab:linf-atk-handcrafted-fmnist}
\end{table}

\begin{table}[!htb]
    \centering
\begin{tabular}{|c|c|c|c|c|c||c|c|}
\hline & Natural & FGSM & BIM& PGD$_{\infty} $ & APGD$_{\infty} $ & PGD$_2 $ & APGD$_2 $ \\
\hline Non-DP & $89.75\%$ & $70.41\%$ & $64.56\%$ & $64.44\%$ & $53.41\%$ & $25.21\%$ & $23.13\%$ \\
\hline DP , $\epsilon=2$ & $85.95\%$ & $72.11\%$ & $69.76\%$ & $69.71\%$ & $67.13\%$ &$46.09\%$ & $45.41\%$ \\
\hline DP , $\epsilon=4$ & $86.60\%$ & $73.67\%$ & $72.68\%$ & $72.69\%$ & $70.84\%$ &$54.30\%$ & $53.92\%$ \\
\hline DP , $\epsilon=8$ & $86.74\%$ & $75.45\%$ & $74.75\%$ & $74.74\%$ & $73.71\%$ &$59.23\%$ & $58.98\%$\\
\hline
\end{tabular}
    \caption{Natural and robust accuracy of CNN models on Fashion MNIST under general adversarial attacks with $\gamma_{\infty}=4/255$ and $\gamma_{2}=0.5$. Attack steps are 20 if applicable. Model hyper-parameters are directly adopted from \citet{tramer2020differentially} for highest natural accuracy. DP models are trained using DP-SGD, $R=0.1$, $\eta_{DP}=4$, momentum $=0.9$, batch size $=2048$. Non-DP models are trained using SGD with the same hyper-parameters except $\eta_{non-DP}=0.4$.}
\end{table}

\begin{table}[!htb]
    \centering
\begin{tabular}{|c|c|c|c|c|c||c|c|}
\hline & Natural & FGSM & BIM& PGD$_{\infty} $ & APGD$_{\infty} $ & PGD$_2 $ & APGD$_2 $ \\
\hline Non-DP & $94.29\%$ & $14.48\%$ & $12.02 \%$ & $12.00\%$ & $12.03\%$ & $31.36\%$ & $31.28\%$ \\
\hline DP , $\epsilon=2$ & $92.73\%$ & $15.70\%$ & $1.59\%$ & $1.61\%$ & $1.62\%$ &$28.05\%$ & $28.06\%$ \\
\hline DP , $\epsilon=4$ & $93.49\%$ & $30.89\%$ & $5.23\%$ & $5.27\%$ & $5.25\%$ &$35.96\%$ & $35.98\%$ \\
\hline DP , $\epsilon=8$ & $93.74\%$ & $9.66\%$ & $4.30\%$ & $4.29\%$ & $4.31\%$ &$33.21\%$ & $33.23\%$ \\
\hline
\end{tabular}
    \caption{Natural and robust accuracy of models transferred from unlabelled ImageNet pre-trained SIMCLRv2 on CIFAR10 under general adversarial attacks with $\gamma_{\infty}=4/255$ and $\gamma_{2}=0.5$. Attack steps are 20 if applicable. Model in each row is the most accurate model obtained by simple grid search: Non-DP: $\eta=0.5$; $\mathrm{DP}_{\epsilon=2}$: $\eta=1.0, R=0.25$; $\mathrm{DP}_{\epsilon=4}$: $\eta=8, R=0.0625$, $\mathrm{DP}_{\epsilon=8}$: $\eta=0.5, R=1.0$. All models are trained using SGD or DP-SGD, momentum $=0.9$ and batch size $=1024$.}
    \label{tab:multi-atk-cifar10}
\end{table}

\newpage
\section{Hyper-parameter setup}
\label{app:hyperparameters}
In \Cref{tab:cifar with madry}, SimCLRv2 models are pre-trained on unlabelled ImageNet and fine-tuned on CIFAR10. \textit{Natural} models are directly adopted from \citet{tramer2020differentially} for highest natural accuracy, where optimizer is DP-SGD and SGD, $R=0.1$, $\eta_{DP}=4$, $\eta_{non-DP}=0.4$, momentum $=0.9$, batch size $=1024$. \textit{Robust} models are obtained by grid search over $\eta$ and $R$ against $l_\infty(2/255)$, where Non-DP: $\eta=0.0625$; $\mathrm{DP}_{\epsilon=2}$: $\eta=4, R=0.0625$; $\mathrm{DP}_{\epsilon=4}$: $\eta=0.5, R=0.0625$, $\mathrm{DP}_{\epsilon=8}$: $\eta=0.125, R=0.25$. Similarly, adversarial training models are obtained by grid search over $\eta$ and a fixed $R=0.0625$, where Non-DP: $\eta=0.25$; $\mathrm{DP}_{\epsilon=2}$: $\eta=0.5$; $\mathrm{DP}_{\epsilon=4}$: $\eta=1$, $\mathrm{DP}_{\epsilon=8}$: $\eta=2$. Other settings are the same as the \textit{natural} ones. Adversarial attack is $l_\infty$, 20 steps, alpha $=0.1$.

In \Cref{tab:cifar with madry l2}, SimCLRv2 models are pre-trained on unlabelled ImageNet and fine-tuned on CIFAR10. \textit{Natural} models are directly adopted from \citet{tramer2020differentially} for highest natural accuracy, where optimizer is DP-SGD and SGD, $R=0.1$, $\eta_{DP}=4$, $\eta_{non-DP}=0.4$, momentum $=0.9$, batch size $=1024$. \textit{Robust} models are obtained by grid search over $\eta$ and $R$ against $l_2(0.25)$, where Non-DP: $\eta=0.0625$; $\mathrm{DP}_{\epsilon=2}$: $\eta=0.0625, R=0.25$; $\mathrm{DP}_{\epsilon=4}$: $\eta=0.5, R=0.0625$, $\mathrm{DP}_{\epsilon=8}$: $\eta=0.125, R=0.25$. Similarly, adversarial training models are obtained by grid search over $\eta$ and a fixed $R=0.0625$, where Non-DP: $\eta=0.0625$; $\mathrm{DP}_{\epsilon=2}$: $\eta=1$; $\mathrm{DP}_{\epsilon=4}$: $\eta=1$, $\mathrm{DP}_{\epsilon=8}$: $\eta=2$. Other settings are the same as the \textit{natural} ones. Adversarial attack is $l_2$, 20 steps, alpha $=0.1$.

In \Cref{fig:rob_acc_vs_clean_acc_cifar10_simclr}, models are SimCLRv2 pre-trained on unlabelled ImageNet and fine-tuned on CIFAR10 using DP-SGD, with $\epsilon=8$, batch size $=1024$. Adversarial attack is $l_{\infty}$ PGD, $\gamma=4/255$, alpha=0.1.

In \Cref{tab:multi-atk-cifar10_rob}, models the same as in \Cref{tab:cifar with madry} with \textit{robust} hyper-parameters, where optimizer is DP-SGD and SGD, momentum $=0.9$, batch size $=1024$, Non-DP: $\eta=0.0625$; $\mathrm{DP}_{\epsilon=2}$: $\eta=4, R=0.0625$; $\mathrm{DP}_{\epsilon=4}$: $\eta=0.5, R=0.0625$, $\mathrm{DP}_{\epsilon=8}$: $\eta=0.125, R=0.25$. Adversarial attack steps $=20$, alpha $=0.1$ if applicable.

In \Cref{tab:resnet18 and vit}, models are ResNet18 and ViT-tiny trained on CelebA, label Smiling. Images are resized to $224\times 224$. Optimizer is DP-RMSprop with epochs $=5$, batch size $=1024$, $\eta=0.0002$, $R=0.1$, delta=5e-6. Adversarial attack is $l_{\infty}$ PGD, 20 steps, alpha $=1/255$.

In \Cref{tab:resnet18 general attacks}, models are ResNet18 as in \Cref{tab:resnet18 and vit}, trained on CelebA, label 'Smiling'. Images are resized to $224\times 224$. Optimizer is DP-RMSprop with epochs $=5$, batch size $=1024$, $\eta=0.0002$, $R=0.1$, delta=5e-6. Adversarial attack is $l_{\infty}(2/255)$ with $\text{alpha}_{\infty}=1/255$ and $l_2(0.25)$ with $\text{alpha}_{2}=0.2$, 20 steps, if applicable.

In \Cref{fig:params-matter-celeba-adam-loglog-shu}, models are 2-layer CNN
trained on CelebA label `Male' using DP-Adam, where $\epsilon=2$, batch size $=512$, epochs $=10$. Adversarial attack is $l_\infty(2/255)$ PGD, 20 steps, alpha $=0.1$. 

\section{Extra}
\label{app:RMSprop}
\subsection{Robust and accuracy landscapes of DP optimizers}

\begin{figure}[!htb]
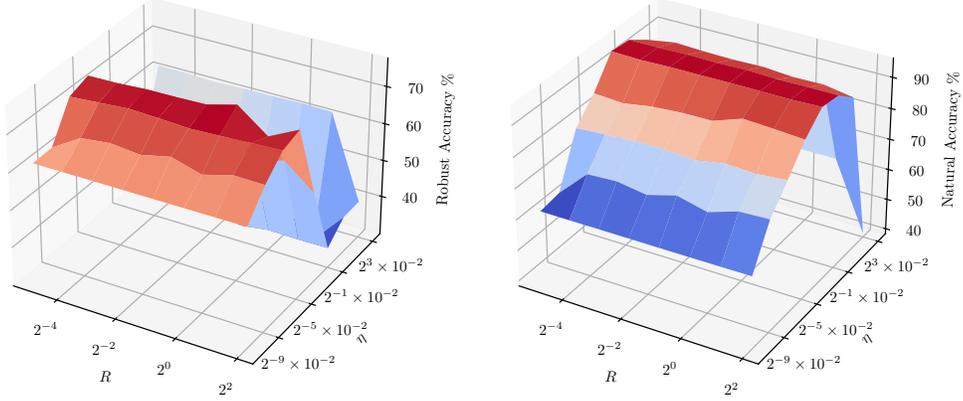

    \vspace{-0.5cm}
    \centering
    \includegraphics[width=0.4\linewidth]{params-matter-celeba-adam-logrlr-robust-3d.pdf}
    \includegraphics[width=0.4\linewidth]{params-matter-celeba-adam-logrlr-clean-3d.pdf}
    \vspace{-0.2cm}
    \caption{Robust and natural accuracy by $\eta$ and $R$ on CelebA with label `Male'. We train a simple CNN with DP-Adam and test under 20 steps of $l_\infty(2/255)$ PGD attack. See details in \Cref{app:hyperparameters}.
    }
    \label{fig:params-matter-celeba-adam-loglog-shu}
\end{figure}

We note that the diagonal pattern of the accuracy landscapes observed in \Cref{fig:params-matter-celeba-simclr-sgdmtm-loglog} (CIFAR10 \& DP-Heavyball), \Cref{fig:params-matter-cifar10-simclr-sgd-loglog} (CIFAR10 \& DP-SGD), \Cref{fig:params-matter-mnist-lr0.8} (MNIST \& DP-SGD) and \Cref{fig:params-matter-fmnist-sgdmtm-loglog} (Fashion MNIST \& DP-SGD) is not universal. For example, in \Cref{fig:params-matter-celeba-adam-loglog}, we show that adaptive optimizers are much less sensitive to the clipping norm $R$, as the landscapes are characterized by the row-wise pattern instead of the diagonal pattern. This pattern is particularly obvious in the small $R$ regime, where the robust and natural accuracy are high (see right panel in \Cref{fig:params-matter-celeba-adam-loglog}.).

To rigorously analyze the insensitivity to the clipping norm in a simplified manner, we take the RMSprop\citep{tieleman2012lecture} as an example, similar to the analysis in \citep{bu2022automatic} on DP-Adam. When $R$ is sufficiently small, the private gradient in \eqref{eq:private grad} becomes
\begin{align*}
  \tilde{\g}_t &= \sum_i \frac{\g_t(\x_i)}{\max(1, ||\g_t(\x_i)||_2/R)} + \sigma R\mathcal{N}(0, I) 
  = \sum_i \frac{\g_t(\x_i)}{||\g_t(\x_i)||_2/R} + \sigma R\mathcal{N}(0, I)
  \\
  &=R\cdot\left(\sum_i \frac{\g_t(\x_i)}{||\g_t(\x_i)||_2} + \sigma\mathcal{N}(0, I)\right) 
  :=R\cdot\hat{\g}_t.
\end{align*}

With private gradient $\tilde\g_t$, DP-RMSprop updates the parameters $\bm\theta_{t}$ by
\begin{align}
  \bm\theta_{t} =\bm\theta_{t-1} + \eta_t \frac{\tilde{\g}_t}{\sqrt{\tilde{\bm v_t}}}, \label{eq:dp-rmsprop theta}
\end{align}
where $\tilde{\bm v}$ is the squared average of $\tilde{\g_t}$, written as
\begin{align}
  \begin{split}
  \tilde{\bm v_t} &= \alpha \tilde{\bm v}_{t-1} + (1-\alpha) \tilde{\g}^2_t
  = \sum_{s}^{t} (1-\alpha) \alpha^{t-s} \tilde{\g}^2_s 
  = R^2 \cdot \sum_{s}^{t} (1-\alpha) \alpha^{t-s} \hat{\g}^2_s \end{split}
\label{eq:tilde v}
\end{align}

\vspace{-0.2cm}
Substitute \eqref{eq:tilde v} into \eqref{eq:dp-rmsprop theta}, we obtain an updating rule that is independent of the clipping norm $R$,
\begin{align*}
  \bm \theta_{t} &=\bm \theta_{t-1} + \eta_t \frac{R\cdot\hat{\g}_t}{\sqrt{R^2 \cdot \sum_{s}^{t} (1-\alpha) \alpha^{t-s} \hat{\g}^2_s}} =\bm\theta_{t-1} + \eta_t \frac{\hat{\g}_t}{\sqrt{\sum_{s}^{t} (1-\alpha) \alpha^{t-s} \hat{\g}^2_s}}.
\end{align*}


As a result, DP optimizers can have fundamentally different landscapes with respect to the hyper-parameters $(R,\eta)$, which in turn may affect the accuracy and robustness as illustrated in \Cref{fig:optimizers CelebA}.


\end{document}